\begin{document}

\title{The dynamics of belief: continuously monitoring and visualising complex systems}
\titlerunning{The dynamics of belief}
%
\author{Edwin J.\ Beggs \and
John V.\ Tucker
}
\authorrunning{E.J.\ Beggs \& J.V.\ Tucker}
%
\institute{School of Mathematics and Computer Science,\\ Computational Foundry, Swansea University, \\Bay Campus, Fabian Way, \\Swansea, SA1 8EN, United Kingdom}

\maketitle

\begin{abstract}
The rise of AI in human contexts places new demands on automated systems to be transparent and explainable. We examine some anthropomorphic ideas and principles relevant to such accountablity in order to develop a theoretical framework for thinking about digital systems in complex human contexts and the problem of explaining their behaviour.  Structurally, systems are made of modular and hierachical components, which we abstract in a new system model using notions of \textit{modes} and \textit{mode transitions}. A mode is an independent component of the system with its own objectives, monitoring data, and algorithms. The behaviour of a mode, including its transitions to other modes, is determined by functions that interpret each mode's monitoring data in the light of its objectives and algorithms.  We show how these \textit{belief functions} can help explain system behaviour by visualising their evaluation as trajectories in higher-dimensional geometric spaces. These ideas are formalised mathematically by abstract and concrete \textit{simplicial complexes}. We offer three techniques -- a framework for design heuristics, a general system theory based on modes, and a geometric visualisation -- and apply them in three types of human-centred systems.
\\ 
\\
\noindent \textbf{Keywords.}  explainable systems, hierarchical systems, modes, mode transitions, belief functions, simplicial complex, social systems
\end{abstract}


\section{Introduction} \label{intro}

\begin{quotation}
\noindent We cannot afford to continue playing catch-up regarding AI -- allowing its use with limited or no boundaries or oversight and dealing with the almost inevitable human rights consequences after the fact.

\rightline{\textit{Michelle Bachelet, UN High Commissioner for Human Rights, 2021}\cite{unhc}}
\end{quotation}

\noindent We consider the automation of human-centred systems, a subject made urgent and controversial by modern AI.  Starting with reflections on explainability and possible design heuristics, we propose a new general system based on new mathematical models of modes and mode transitions, and having geometric visualisations of what a system mode `believes' about its monitoring data. We will address these points in turn and then summarise the contributions.


\subsection{Automation and the challenge of explainability}\label{explainability}

Automation aims to reduce human activity and dependency in all sorts of tasks. To do this, decisions and actions that belong to people must be abstracted, specified precisely and designed into a relevant system that reduces human involvement and interventions. Automation in this sense has been expanding its nature and scope since the 18th Century \cite{Giedion1948}. Through the new applications and ambitions of AI, automation is on the rise in the physical world, where it is long established, and in human affairs, where it is novel and controversial.\footnote{For example, we see it in new `smart' buildings and products, autonomous vehicles and weapons, staff recruitment and monitoring, customer services, medical triage, treatment and aftercare, and surveillance and criminal detection.}  Under the guise of AI, in so many fields we are seeing a triumphant march of measurement, data collection, data classification, computation and autonomous decision making. Wherever there is automation there is a potential problem understanding the behaviour it produces:

\textit{Is the automation functioning as specified or expected? Are its specifications and expectations what are required or desired or allowed?  What parts might need attention and change?} 

In the case of automatic physical systems problems almost always can be discovered, understood and resolved in some way -- thanks to centuries of science, engineering and mathematics. In the case of people the situation is quite other. Our understanding of automation is poorer, challenged by people's individuality, their limitations, expectations, conventions, bias, diverse intangible experiences; it is challenged by cultural backgrounds and formalised jurisdictions, and so on.  Personal and social contexts refuse to yield to neat theories with their limited predictive power and many unintended consequences; personal and social conditions change all the time anyway. We are still learning how \textit{not} to manage people in highly structured social environments, such as companies, as a century of management science, after Frederick W Taylor, testifies \cite{Taylor1911,Wren2011}. To the system designers, maintenance engineers and owners, the specifications are a challenge to formulate, and can be expected to mutate on deployment for all sorts of reasons.

In this paper, we will address system design for automation in human contexts, almost from first principles. We will reflect on a few established criteria of importance to people and suggest some ideas and directions for technical research that reflect these criteria. Thus, the tone of this theoretical paper is anthropomorphic, sampling aspects of human thought for technical inspiration: human thinking has a record of working, and human thought is explainable to humans. Subsequently, we will introduce high level data structures and their visualisation, which can be applied, using mathematical ideas of belief, to monitoring the dynamics of decision making in automation. 


\subsection{System theory as a tool}\label{system_theory_tool}

In human affairs, AI urgently needs to become transparent and explainable because systems must be trustworthy and `easy' to understand at some level.  We address the problem of \textit{transparency}, which due to both safety and human rights considerations can no longer be treated as an \textit{after the fact} investigation \cite{unhc}. 
Given the likely future economic importance of people and machines working together
\cite{DauWil,Kissinger2021}, it is vitally important that automations can clearly reveal their state and intentions to human co-workers in realtime. 

For us, the key notion is \textit{system}. Today, the term is everywhere in everyday speech. It was elevated into a scientific concept by biologist Ludwig Von Bertalanffy (1901-1972) starting in the late 1920s. Following World War II, system theories were developed in all sorts of directions notably in the engineering, management and economic sciences \cite{Bertalanffy1950,Bertalanffy1972}. Many of these theories were mathematical and had wide scope and grand ambitions \cite{Kalmanetal1969,Mesarovic1970,Mesarovic1975}. Computer science generated many system models with well-developed theories, including foundational work in highly abstract mathematical forms that attempted to bring formal unity. 

Indeed, general system theory has identified a remarkable collection of commonalities.\footnote{These include: basic definitions, system thinking, system topologies, life cycles, system performance, conceptual design, current state evaluations, solving methods, creative solutions, system synthesis, system analysis, optimization, solution assessment, virtual optimizing, system engineering, and evaluation of knowledge in economy and society.} In particular, system theory has long recognised that systems are modular and hierarchical and the components are a mix of equipment, software \textit{and} people \cite{Simon1962}. Any of these can --  and in practice do  -- fail. Add to this the threat of cyberattack, where the objective is to make systems fail. Absolute security or predictability of a system is impossible, so plans need to be made to contain and work around damage, the classic problem of \textit{resilience}.


\subsection{A new general system model}\label{new_general_system-model}

We introduce an abstract notion of \textit{mode} of a system designed to describe the structure and functioning of the system, and a geometric realisation of the modes to allow visualisation and implement continuity. The description in terms of modes would be accessible on demand with the visualisation showing not only which mode the system is in, but whether it is fit-for-purpose, and to which mode it is likely to transition in the near future \cite{shnAcc}.

Real world systems are driven by data, which may not be reliable. The decisions and actions taken by the system are based on \textit{judgements} about the data available against known criteria, which we are going to refer to as \textit{beliefs}. Thus, as far as mathematical models are concerned, the behaviour of the system over time is a trajectory in a hugely complex idealised state space. For this to be transparent we need to reveal a dynamics of belief that explain the dynamics of system behaviour.  In short, this dynamics of belief is a new trajectory in a space of beliefs about data and perceptions of the system.

We consider a large range of data sources as simply \textit{oracles}, which are providers of information which might, or might not, be reliable. Typically, these are gathered by sensors and generated by computations external to the system (e.g., by neural nets or quantum devices). We also include outputs sent to external devices as oracles (as many actuators talk back). By restricting a mode's access to various oracles we have a high-level mechanism for enforcing restrictions on what a mode can know and do, i.e., a method for enforcing transparency. 

The mode construction and its visualisation using simplicial complexes were originally designed for complex physical control systems \cite{BeTuSimp}. In this paper we have deliberately moved away from that to social examples to develop further the generality of the concepts and methods. Here, for example, we have also defined the general idea of a \textit{frame of belief} which has relevance to social and academic constructs and explainablity. The idea is to present the modes and their visualisation as an architecture in logical AI. 


\subsection{Contributions and structure of this paper}\label{structure}

We present a general approach with an informal characterisation of modes and mode transitions, algebraic and geometric models, and examples to illustrate their richness.
Main contributions of this paper and their location are:

1. To consider some aspects of human behaviour informally, before formalising a model of how the knowledge and beliefs of a system and its responses change over time (Section \ref{explainability}). 

2. To define concepts and principles for modelling systems using modes and mode transitions (Section \ref{modes}).

3.  To address explainability by defining a mathematical model for beliefs and visualising changes in beliefs about the monitoring data and hence the system, in modes using simplicial complexes (Sections \ref{simplicial_complexes} and \ref{belvis4}). 

4. To illustrate the combination of modes and beliefs in three case studies of human-centred systems (Section \ref{social_examples}).

5. To highlight the role of change in underlying epistemic frameworks of belief in understanding the design and operation of systems (Section \ref{belief_theory_change}).


\section{Explainability and an anthropic view of systems} \label{explainability}

We will briefly review some ideas that lie just below the surface of our view of systems. We will comment on explanations and how people's performance in a situation or scenario may be systematised.


\subsection{What is an explanatory framework?}\label{framework_models}

Our considerations lead to questions about the structure of the system, its design and deployment, and its operation and performance: What counts as an explanatory framework? What ways might there be of seeing what is going on? To develop an informal \textit{explanatory framework}, consider some general principles, starting with:

\medskip
\begin{description}[noitemsep,topsep=0pt,align=left]
\item [Observation.] An explanation depends upon on data available to do with the system; the data collected, processed, computed by and about the system.
 
\item [Categorisation.]  An explanation is based on categorisations of possible behaviours that constitutes a conceptual framework for the human coworker.

\item [Refinement.]  An explanation is capable of refinement, revealing more detail, according to the significance of the information available.

\item [Prediction.] An explanation is capable of making predictions about the system is certain situations. 
\end{description}
\medskip
Tightly coupled to the explanatory framework is a \textit{visualisation framework} ideally satisfying:

\medskip
\begin{description}[noitemsep,topsep=0pt,align=left]
\item [Continuity.] The behaviour over time should be understandable from the visualisation.

\item [Faithfulness.] The visualisation provides access to the actual functioning of the system, and does not conceal important properties of its function. 
\end{description}

\medskip

One way this can be done is to incorporate the explanatory and visualisation frameworks as an essential part of the design process, allowing us to `see' what the system is `thinking'.

Of course, this explanatory framework is impossible without simplification and information hiding on a grand scale. 

However, the frameworks have foundations that are theoretical and empirical and,  indeed, influenced by the cultural, social and political. This is evident in human centred systems.

Since explainability is defined in terms of human understanding, and humans have a record of being resilient, we begin by considering how humans might understand and control a situation systematically. Our purpose is to seek ideas that can be abstractly formulated and used to shape a conceptual framework for explainability and our technical proposals.


\subsection{Designing scenarios}\label{scenarios}

Consider a person responsible for a system and imagine him or her in a scenario composed of the following:

\medskip
\begin{description}[noitemsep,topsep=0pt,align=left]
\item [Objectives.]  A person has long term \textit{aims} which entail shorter term \textit{objectives}.
\item [Modes.]  The system is in a state and they are presented with a choice of possible actions.
\item [Principles.]  There are `principles' -- assumptions of various kinds -- implying that actions have certain consequences, good or ill.
\item [Intent.]  They intend to take the actions whose consequences they believe will best further their objectives. 
\end{description}
\medskip

\noindent We need to have clear the distinction between between 

(a) \textit{principles} about a real situation 

(b) \textit{observations} of the situation 

(c) \textit{actions} applied to the situation. 

\noindent In science and engineering these ideas are familiar; for instance, in ballistics, the objective is to hit a target. Newton's laws of motion, some fluid dynamics, would be a source of principles; observation would be measurements of the wind, humidity, temperature and range of an object to hit a target; and an action by the gunner would be to set the parameters to aim to hit the target and fire. Human experience is also a factor providing essential heuristics, of course. In more complex situations the set of principles is a large set of assumptions explicit and implicit only some of which will shape the systematisation.

\medskip
\begin{description}[noitemsep,topsep=0pt,align=left]
\item [Oracles.] A person who is uncertain will \textit{observe} more information about the real sitiuation.
\item [Beliefs.] Using their \textit{principles}, they will evaluate the information to form \textit{beliefs} about how possible actions best further their objectives. 
\item [Choice.] Based on these beliefs they choose actions to perfrom.
\item [Aims.] They may change objectives based on aims and  information about reality.
\end{description}
\medskip

An objective may merely be the legacy of a past decision. For instance, a general decides to capture a hill, and from then onwards many decisions are taken using this as an objective. If capturing the hill proves impossible, the general will likely need to choose another objective. In turn, the aims within which the general operates may have been set by a dictator deciding to invade another country. 

When humans design or control systems they have an opportunity to specify objectives which the systems cannot override. For example, the average electronic heating controller cannot decide to vary the temperature outside a specified range (unless an error happens). At an extreme of the AI spectrum, Isaac Asimov's fictional robots were programmed with three laws of robotics as immutable objectives which would override any other objectives.


\subsection{Designing components for actions}\label{actions}

\begin{quotation}
\noindent
Frustra fit per plura quod potest fieri per pauciora. 

\noindent [It is futile to do with more things that which can be done with fewer.] 

\rightline{\textit{William of Ockham, Summa Totius Logicae, circa 1323 A.D.}}  
\end{quotation}

\noindent Consider two guidelines for people on how to organise the set of possible actions in a given scenario, one of these addressing \textit{simplicity} (and, so, indirectly explainability) and the other addressing \textit{resilience}, when current assumptions fail.

\medskip
\begin{description}[noitemsep,topsep=0pt,align=left]
\item [Occam's Razor 1: Actions.] A person will prefer a smaller number of possible actions rather than a larger number if they believe the smaller number can bring their intent about.
\item [Incompleteness 1: Actions.]  If they believe that the possible actions will not bring about their objectives, they seek more possible actions.
\end{description}
\medskip

Thus, we reduce the set of possible actions when we can, and expand it when we have to. 

Of course, these two guidelines can also be applied to the principles we use to design and reason about the system. 

\medskip
\begin{description}[noitemsep,topsep=0pt,align=left]
\item [Occam's Razor 2: Principles.] In a given scenario, use the least number of principles required to explain reality and model the objectives.
\item [Incompleteness 2: Principles.]   When reality becomes inconsistent with the principles, seek other principles. 
\end{description}
\medskip

We can also consider Occam's Razor as a guide to hiding irrelevant data. Information hiding is an idea central to software engineering, formulated explicitly as a methodological principle by David Parnas in the 1970s and brought into mainstream software development in abstract data types and object oriented programming languages, long before our current data-overloaded society. In societies, people have specialised roles, often related to data hiding:

\medskip
\begin{description}[noitemsep,topsep=0pt,align=left]
\item [Specialisation.] People only see a minimal amount of information for a role which they do not perform.
\item [Restricted access.] Only certain people have a right to access certain data to maintain security or privacy.
\end{description}
\medskip

Of course, in applying these points to automatic systems we have to make allowances. The average electronic heating controller has no idea of the laws of convection or conduction, it only knows the formulae it has been programmed with. However, even it has some understanding of incompleteness -- e.g., when its actions seem not to work --  it uses its fault light and hopes for an intervention by a technician. 

The big problem with incompleteness is that the person's existing knowledge may not have a previously prepared method to achieve the objectives -- this is the acquiring of new skills. The person has to be able to make new possible courses of action, which need to be checked, first purely theoretically and then more practically.

\begin{description}[noitemsep,topsep=0pt,align=left]
\item [Innovation.] The generation of new possibilities.
\item [Imagination.] New possibilities are matched against existing principles to predict their effect.
\item [Testing.] Controlled experiments are used to validate new possibilities.
\end{description}

In the rest of this paper we shall use these ideas to suggest and shape formal concepts that make up a possible formal complement to this informal framework. In particular, we consider how to both formalise and visualise scenarios, using a dynamics generated by beliefs about the basic monitoring data available to the system. The structure of our thinking is, essentially, to formalise this \textit{critical path}:

\begin{eqnarray}\label{basiceq}
 \mathrm{Monitoring Data}\ \stackrel{\mathrm{Evaluate}} \longrightarrow \ \mathrm{Belief\ about\ data}\  \stackrel{\mathrm{Visualise}}  \longrightarrow \ \mathrm{Visualisation\ of\ belief} \ \
\end{eqnarray}
 

\section{Modes and visualisation} \label{modes}

\begin{quotation}
My thesis has been that one path to the construction of a non-trivial theory of complex systems is by way of a theory of hierarchy. Empirically, a large proportion of the complex systems we observe in nature exhibit hierarchic structure. On theoretical grounds we could expect complex systems to be hierarchies in a world in which complexity had to evolve from simplicity. 

\noindent \rightline{\textit{Herbert Simon} \cite{Simon1962}}
\end{quotation}


Modes are a multi-level categorisation of the objectives and behaviours of systems.   
 A mode combines data relevant to the objectives and operation of the system with algorithms governing the behaviour and the external communication channels and protocols.


\subsection{What is a mode?}\label{what_is_ a_mode}

Here is a working definition to begin building the conceptual framework:
 
 \begin{definition} \label{mode_working definition}
Consider a system \em{Sys} operating in an environment  \em{Env}.
A {\em mode} of the system  \textit{Sys} is defined by these characteristics:

\noindent
1.  A mode is associated to, or determines, a subset of the possible states of the system.

\noindent
2.  A mode is designed to deliver on and meet objectives for the system when in these states.

\noindent
3. A mode consists of

(a) methods to input data from the environment  \em{Env} and to output data to \textrm{Env}.

(b) data types and algorithms for implementing the objectives.

\noindent
4. Methods to evaluate the performance of the mode against its objectives and, if necessary, choose and transfer to another mode.
\end{definition}

Thus, a mode is responsible for specific aspects of the system's performance, which may be relevant to the high-level, primary objectives and purposes of the system, or lower-level, localised technical services. It may be autonomous or a platform for accessing other modes. Each mode owns its relationship with the environment.


\subsection{Principles for designing modes}\label{basic_principles}

With the initial intuitions of Definition \ref{mode_working definition} in mind,  we can formulate some design principles to develop a conceptual framework for thinking about systems in terms of modes and mode transitions.

\medskip\noindent
\textbf{Completeness}. A set of modes for a system is a classification of the operation or behaviour of a system. At any time, a system can be in one, or more, modes. 

\medskip\noindent
\textbf{Composition}. When a system is in a number of modes then that situation itself constitutes a mode.

\medskip\noindent
\textbf{Component}. A set of modes for a system consists of (i) a set of \textit{basic modes} and (ii) \textit{joint modes} made by combining other modes. 

\medskip\noindent
\textbf{Localisation}. Each mode possesses its own data about its behaviour and environment. This monitoring data determines the mode's own state space and data types.

\medskip\noindent Thus, our knowledge of the system at any time is localised to the modes active at that time.

\medskip\noindent
\textbf{Quantification}. If a state of the system is meaningful for a number of modes then the relevance or suitability of these modes for the state of the system must be quantified, calibrated and interpreted.

\medskip\noindent
\textbf{Visualisation}.
With quantification and calibration comes the possibility of visualisation via geometric objects drawn to a scale and via derived qualitative diagrams.

\medskip\noindent
We shall visualise in space the basic modes by vertices, and the joint modes arising by combining them as lines, triangles, tetrahedra etc. The geometric objects we build are called \textit{simplicial complexes}. Simplicial complexes are made up of geometric pieces called \textit{faces}. 

\medskip\noindent
\textbf{Modes and faces}. In a successful system model using modes, \textit{every face of its simplicial complex is a mode and every mode is represented by a face}.

\medskip\noindent
Quantification can take the form of a position in the space representing the suitability of the mode for the state of the system, in a line or triangle or tetrahedron, etc.\ representing the joint mode. This position is computed by \textit{belief functions} that from the state of the system calculates measures of the relevance of other modes to the state.

\medskip\noindent
In the time evolution of the system we need to decide

(i) if, and when, a system should change from one mode to another, and 

(ii) which new mode should chosen.

\medskip\noindent
\textbf{Thresholds}. The transition out of one mode into another is governed by the results of the quantification and calibration. The decision to move to a new mode may be specified by numerical thresholds. Transition has these stages:

(a) the realisation that the mode is approaching its limitations

(b) the selection of modes that could be more appropriate

(c) triggers to choose and change to a new mode.

\noindent The belief functions, in computing relevance, are a means to trigger changes of mode.

\medskip\noindent
\textbf{Explanation}. The conceptual system of modes, mode evaluation functions, thresholds and mode transition functions, and their visual representation in simplicial complexes, can serve as an explanatory framework for the dynamical behaviour of automatic systems.


\subsection{What is visualised? Some simple examples}\label{examples}

 We consider some simple examples to begin to shape our informal ideas of a system of modes, the evaluation of beliefs and how they might be visualised. 
 
  \begin{figure}[htbp]
\begin{center}
 \unitlength 0.45 mm
\begin{picture}(130,51)(10,27)
\linethickness{0.3mm}
\put(20,35){\line(0,1){30}}
\linethickness{0.3mm}
\multiput(70,50)(0.24,0.12){167}{\line(1,0){0.24}}
\linethickness{0.3mm}
\put(110,30){\line(0,1){40}}
\linethickness{0.3mm}
\multiput(70,50)(0.24,-0.12){167}{\line(1,0){0.24}}
\linethickness{0.2mm}
\put(70,50){\line(1,0){37}}
\linethickness{0.2mm}
\put(113,50){\line(1,0){12}}
\linethickness{0.3mm}
\multiput(110,70)(0.12,-0.16){125}{\line(0,-1){0.16}}
\linethickness{0.3mm}
\multiput(110,30)(0.12,0.16){125}{\line(0,1){0.16}}
\linethickness{0.1mm}
\multiput(82,48)(0.22,0.12){117}{\line(1,0){0.22}}
\linethickness{0.1mm}
\multiput(92,44)(0.18,0.12){83}{\line(1,0){0.18}}
\linethickness{0.1mm}
\multiput(112,57)(0.12,-0.13){67}{\line(0,-1){0.13}}
\linethickness{0.1mm}
\multiput(112,48)(0.12,-0.12){42}{\line(1,0){0.12}}
\linethickness{0.1mm}
\multiput(100,40)(0.14,0.12){50}{\line(1,0){0.14}}
\linethickness{0.3mm}
\multiput(89,57)(0.24,0.12){42}{\line(1,0){0.24}}
\put(99,62){\vector(2,1){0.12}}
\linethickness{0.3mm}
\multiput(89,43)(0.24,-0.12){42}{\line(1,0){0.24}}
\put(99,38){\vector(2,-1){0.12}}
\linethickness{0.3mm}
\put(90,52){\line(1,0){10}}
\put(100,52){\vector(1,0){0.12}}
\linethickness{0.3mm}
\put(23,44){\line(0,1){10}}
\put(23,44){\vector(0,-1){0.12}}
\put(20,70){\makebox(0,0)[cc]{judgement of monitoring data}}

\put(20,30){\makebox(0,0)[cc]{intervention}}

\put(55,54){\makebox(0,0)[cc]{incoming}}

\put(56,46){\makebox(0,0)[cc]{calls}}

\put(128,70){\makebox(0,0)[cc]{police}}

\put(130,30){\makebox(0,0)[cc]{ambulance}}

\put(135,51){\makebox(0,0)[cc]{fire}}

\put(-16,54){\makebox(0,0)[cc]{triggering}}
\put(-16,46){\makebox(0,0)[cc]{an action}}

\put(174,54){\makebox(0,0)[cc]{triage for}}
\put(174,46){\makebox(0,0)[cc]{emergency services}}

\end{picture}
\setlength{\belowcaptionskip}{-25pt}
\medskip
\caption{Visualising a mode transition for a trigger mechanism and triage for emergency services}
\label{modpic}
\end{center}
\end{figure}
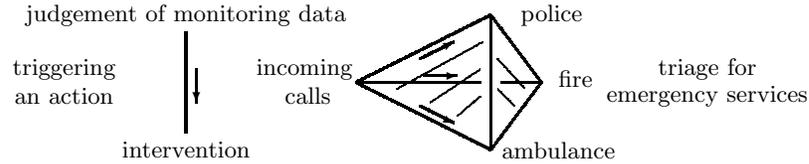

\begin{example}[A simple trigger]\label{man58}
 In Figure~\ref{modpic}, the left diagram represents visually a trigger for action that has three modes: two vertices and one line. Consider the line between a mode that makes judgements of some attribute using data monitoring the environment (the point at the top of the interval) and the mode that performs an intervention (the point at the bottom). Unlike a usual state transition graphs, where lines merely represent transitions or jumps, here the line \textit{itself} represents a (joint) mode where the decision to intervene unfolds according to its monitoring data. These observations are formed into a numerical representation of beliefs about the state of the system as points along the line, beginning at the top and moving to the bottom. Threshold points on the line begin to warn of, and then trigger, a change of mode; on reaching the bottom, the intervention begins. 
 
Witnesses can note the position along the line to see how things were going, and perhaps be reassured that the action was progressing correctly.
Such a point on a line indicator -- a progress bar -- is quite a common visualisation of estimated time taken for a task to be completed. Here, however, the line indicator is displaying the information (= an interpretation of the data) that the control system is actually using to make decisions. If the point is near the top, then a witness knows that the control system has no intention of intervening because they are looking directly at the system's own assessment of the situation. 
\qed
\end{example}
 
In this example, we see that a key to transparency is continuity. We do not discontinuously move between vertices on a graph, but have a continuous motion on a geometric visualisation, which allows an estimate of what the system is \textit{about to do and when}. Another key to transparency is reliability, that what we see is really related to the state of the system.  This means that \textit{the geometry of the visualisation is intimately related to how the system actually makes decisions}.

\begin{example}[A simple emergency] \label{man59}
In the second diagram of Figure~\ref{modpic}, we consider a notification warning of a possible incident -- such as personal attack, social disturbance, or collision of vehicles.  In general, the incident could involve some or all of the primary first responders of police, ambulance and fire brigade. These are represented by vertices in the diagram.  As more information becomes available over (hopefully a short) time period, relevant choices of services can be made and deployed to the scene. In particular, between the separate actions of say deploying police and ambulance we have a combined action of doing both simultaneously, and that is represented in the diagram by a line between the two vertices.

Altogether the 15 faces of the diagram represent 15 possible modes of the system!
The filled-in tetrahedron between the four vertices is the mode which gathers all available information and assesses the response. When beliefs about the state of the system (represented by a point in this tetrahedron or 3-simplex) reaches the triangle at the end then full deployment is instigated. If the incident in some way is misrepresented then the point moves, possibly returning to the warning vertex, and then to the rest of the system (not shown). The state when an action is actually taken should be one of the three action vertices two of the lines between them.
\qed
\end{example}

Systems frequently have to take account of several factors or carry out several tasks, and we take account of this by allowing combinations of modes to also form modes, which translates into edges and triangles, etc.\ representing modes.
In general, we shall represent beliefs about the state of a system as a point in a simplicial complex.

What exactly is a simplicial complex? A 0-simplex is just a point, and a 1-simplex is a line between some of the 0-simplices. Thus, a simplicial complex consisting of only 0 and 1-simplices is just a graph with edges the 1-simplices and vertices the 0-simplices. We shall not assume that this graph is directed -- if there are arrows they are either imposed by principles (e.g.,\ not being able to deploy a parachute which we have already jettisoned) or by the current objectives. 
Increasing the dimension, a basic 2-simplex is a filled in triangle and a basic 3-simplex is a solid tetrahedron, and so on.

The belief about the state of the system is visualised as a point in the geometric simplicial object, showing which mode the system is in and which mode is likely to come next. The position is given by the belief generated by algorithms on the basis of evidence. Technically, there is mathematical belief theory that is a generalisation of probability theory and, again, we refer to Section \ref{belvis4} for a discussion. 

As the modes are linked to the behaviour of the system, the visualisation of the modes gives the current behaviour of the system -- at least its intentions and its capabilities. 

Since each face of simplicial complex is a mode, the choice of a simplicial complex for a system can reveal the existence of possible modes that may be difficult to anticipate and account for (cf. Example \ref{man59}).


\section{Mathematical Tools 1: Simplicial complexes} \label{simplicial_complexes}

The three-stage critical path (\ref{basiceq}), from monitoring data to interpretation and belief to a visualisation of belief for our general conception of modes, can be modelled mathematically using abstract and concrete simplicial complexes, which we describe in this and the next section. 


\subsection{Simplicial complexes}

\begin{definition} \label{absc}
An abstract simplicial complex consists of a collection $\mathcal{C}$ of finite 
subsets of a set $\mathcal{M}$ with the property that if $Y\subset X\in \mathcal{C} $ then $Y\in \mathcal{C} $.
\end{definition}

In line with the \textit{Components Principle} in Section~\ref{modes}, we can take $\mathcal{M}$ to be the set of basic modes of the system, and $\mathcal{C}$ to be the set of joint modes. Thus, a basic mode becomes a singleton set $\{ \alpha  \}$ and a joint mode is a set $\{ \alpha_{1}, \ldots , \alpha_{k} \}$. The rule means that any subset of a mode is a mode.

We visualise this abstract simplicial complex as a \textit{geometric simplicial complex} made up of parts called \textit{simplicies}. This is done by taking
 each basic mode $\alpha\in \mathcal{M}$ to be a point (= a 0-simplex). Then the subsets of size two
 $\{\alpha,\beta\}\in \mathcal{C}$ are visualised as a line (= a 1-simplex) connecting the points $\alpha$ and $\beta$. So far, the resulting structure is an undirected graph, but now we continue in higher dimensions: 
 for example, the subsets of size three
 $\{\alpha,\beta,\gamma\}\in \mathcal{C}$ are visualised as a filled in triangle (= a 2-simplex) connecting the points $\alpha$, $\beta$ and $\gamma$, and so on. 
 
 \begin{proposition} \label{simpX}
To every abstract simplicial complex (as in Definition~\ref{absc}) is associated its standard realisation $\Delta_\mathcal{C}$, a {\em simplicial complex} as follows. Form a vector space $\mathbb{R}^\mathcal{M}$ with basis $e_\alpha$ for $\alpha\in \mathcal{M}$. The simplex spanned by $X\in \mathcal{C}$ is
\[
\Delta_X=\Big\{ \sum_{x\in X} \lambda_x  \, e_x : 
\lambda_x\in[0,1],\  \sum_{x\in X} \lambda_x=1\Big\}.
\]
The simplex $\Delta_X$ is a $(|X|-1)$-simplex where $|X|$ is the size of $X$, and if $Y\subset X$ then $\Delta_Y$ is a {\em face} of $\Delta_X$. 
\end{proposition}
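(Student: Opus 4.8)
The plan is to verify the two assertions directly from the definition of $\Delta_X$, first recognising it as a standard geometric simplex and then identifying $\Delta_Y$ as one of its faces.

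First I would establish that the $|X|$ points $\{e_x : x \in X\}$ are affinely independent. Since the $e_x$ are distinct members of the basis of $\mathbb{R}^\mathcal{M}$, they are linearly independent; fixing one index $x_0 \in X$, the differences $e_x - e_{x_0}$ for $x \in X \setminus \{x_0\}$ are then readily seen to be linearly independent, which is exactly affine independence of the $e_x$. By definition, $\Delta_X$ is the set of convex combinations $\sum_{x \in X} \lambda_x e_x$ (coefficients in $[0,1]$ summing to $1$), that is, the convex hull of these points. The convex hull of $|X|$ affinely independent points is a geometric simplex of dimension $|X|-1$, giving the first claim. One should note that when $\mathcal{M}$ is infinite the ambient space $\mathbb{R}^\mathcal{M}$ is infinite-dimensional, but since $X$ is finite the object $\Delta_X$ lies inside the finite-dimensional subspace $\mathrm{span}\{e_x : x \in X\}$, so talk of dimension and convex hull remains meaningful.

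For the face claim, given $Y \subset X$ I would first check $\Delta_Y \subseteq \Delta_X$: any $\sum_{y \in Y} \lambda_y e_y \in \Delta_Y$ is also an element of $\Delta_X$ upon setting the remaining coefficients $\lambda_x = 0$ for $x \in X \setminus Y$, since these extended coefficients still lie in $[0,1]$ and sum to $1$. To see that $\Delta_Y$ is genuinely a \emph{face}, I would exhibit a supporting hyperplane cutting it out. Writing a point $v = \sum_{x \in X} \lambda_x e_x$ of $\Delta_X$ in terms of its barycentric coordinates, consider the linear functional $\phi(v) = \sum_{x \in X \setminus Y} \lambda_x$. On $\Delta_X$ one has $\phi \geq 0$, and $\phi(v) = 0$ holds precisely when every coefficient indexed by $X \setminus Y$ vanishes, that is, precisely when $v \in \Delta_Y$. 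Hence $\Delta_Y = \Delta_X \cap \ker\phi$ is the intersection of $\Delta_X$ with a supporting hyperplane, and it is exactly the convex hull of the subset $\{e_y : y \in Y\}$ of the vertices of $\Delta_X$; this is what it means for $\Delta_Y$ to be a face.

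This argument is a sequence of essentially routine verifications, so I do not anticipate a serious obstacle. The one point needing a little care is the affine-independence step, since it is what distinguishes a genuine $(|X|-1)$-simplex from a degenerate convex hull; and, as noted, a sentence is warranted to handle the case of infinite $\mathcal{M}$ so that dimension is well defined.
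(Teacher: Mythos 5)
Your proof is correct: affine independence of the basis vectors $e_x$, identification of $\Delta_X$ as the convex hull of $|X|$ affinely independent points, and the supporting functional $\phi(v)=\sum_{x\in X\setminus Y}\lambda_x$ cutting out $\Delta_Y$ is exactly the routine verification this statement requires. The paper itself supplies no proof at all — it states the proposition as the standard geometric realisation construction and merely remarks afterwards that the large ambient dimension is only an existence device — so your argument (including the sensible caveat that for infinite $\mathcal{M}$ one works inside the finite-dimensional span of $\{e_x : x\in X\}$) simply makes explicit what the paper treats as standard.
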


\begin{example} \label{bvcu}
Suppose there are three basic modes $ \alpha, \beta, \gamma $. In general, the set of basic and joint modes in the 2-simplex (filled in triangle)
shown in Fig.~\ref{part4g}  is
$$\{  \alpha, \beta, \gamma, \alpha\beta, \alpha\gamma, \beta\gamma,  \alpha\beta\gamma \}.$$
\end{example}

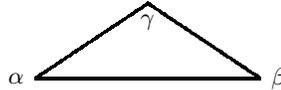
\begin{figure}[htbp]
\begin{center}

\unitlength 0.5 mm
\begin{picture}(95,20)(0,40)
\linethickness{0.3mm}
\multiput(30,40)(0.18,0.12){167}{\line(1,0){0.18}}
\linethickness{0.3mm}
\multiput(60,60)(0.18,-0.12){167}{\line(1,0){0.18}}
\linethickness{0.3mm}
\put(30,40){\line(1,0){60}}
\put(25,40){\makebox(0,0)[cc]{$\alpha$}}

\put(95,40){\makebox(0,0)[cc]{$\beta$}}

\put(60,55){\makebox(0,0)[cc]{$\gamma$}}

\end{picture}

\setlength{\belowcaptionskip}{-15pt}
\medskip\caption{Triangle for Example~\ref{bvcu}   }
\label{part4g}
\end{center}
\end{figure}

It is usually possible to re-draw the geometric simplicial complex in a smaller number of dimensions, the very large number in the Proposition simply gives an existence result.


\subsection{Evaluating states}

Consider the system from the viewpoint of its states. In line with the \textit{Quantification Principle} in Section~\ref{modes}, we would like to associate to a state of the system a point in the geometric simplicial complex. To do this it is easiest to consider a space $\mathbb{S}$ of possible states of the system. The basic mode $\alpha$ will apply when the system state is in a certain subset $U_\alpha\subset \mathbb{S}$ of all the possible states. If we want every possible state of the system to be assigned to a mode then the union of all the $U_\alpha\subset \mathbb{S}$ is all of $\mathbb{S}$, i.e.,\ we have a \textit{cover} of $\mathbb{S}$. 

\begin{definition} \label{local}
Let $\mathcal{M}$ be a set of basic modes.
We localise the state space to the states for the joint modes by defining $\mathbb{S}_X=\cap_{\alpha\in X} U_\alpha$ for $X\subset \mathcal{M}$ for which  $\mathbb{S}_X$ is not empty. Thus $\mathbb{S}_{\{\alpha,\beta\}}=U_\alpha\cap U_\beta\subset \mathbb{S}$.
\end{definition}

Now define
\[
\mathcal{C} = \big\{ X\subset \mathcal{M} : \cap_{\alpha\in X} U_\alpha\neq\emptyset\big\},
\]
and it is immediate that $ \mathcal{C} $ is itself an abstract simplicial complex.

\begin{definition} \label{realpu}
A {\em partition of unity} for the cover $U_\alpha\subset \mathbb{S}$ is a function $\phi_\alpha:\mathbb{S}\to[0,1]$ for every $\alpha\in\mathcal{M}$ such that 

(1)\quad
if $\phi_\alpha(s)\neq 0$ then $s\in U_\alpha$

(2)\quad $\sum_{\alpha\in\mathcal{M}}\phi_\alpha(s)=1$ for all $s\in \mathbb{S}$.

\noindent For specific circumstances we can impose extra conditions on $\phi_\alpha$, e.g.,\ continuity or computability. 
\end{definition}

\begin{definition} 
We now define a function $\underline\phi:\mathbb{S}\to \Delta_{\mathcal{C}}$
that classifies and visualises beliefs about the the states of a system using the simplicial complex $\Delta_\mathcal{C}$ by
\[
\underline\phi(s)= \sum_{\alpha\in\mathcal{M}} \phi_\alpha(s)\,e_\alpha\ .
\]

 \end{definition}
 
 \begin{example} 
 Figure~\ref{part4} visualises a simplicial complex and partition of unity for a cover by four sets.
 Note that the triangle in  Figure~\ref{part4} is shaded to form a 2-simplex precisely because $U_\alpha\cap U_\beta\cap U_\gamma$ is not empty. 
\end{example}

\begin{figure}[htbp]
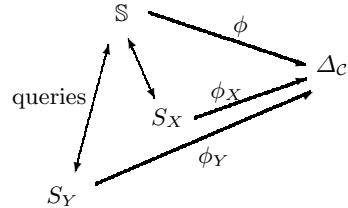

\begin{center}
\unitlength 0.5 mm

\setlength{\belowcaptionskip}{-15pt}
\medskip\caption{Localisation allowing computation of $\phi$}
\label{part83}
\end{center}
\end{figure}

 In general we may not know the state $s\in \mathbb{S}$ of the system. We may need to perform observations or measurements to do this. If we suppose that we are in mode $X\subset \mathcal{C}$ then \textit{oracles} bundled with $X$ are queried for information, and we need to compute with this data.

\begin{definition} \label{localdata}
For the mode $X$ we have the associated datatype $S_X$ containing, amongst other things, the data from queries (e.g.\ information from sensors). 
Then $S_X$ is our computable model of the state space $\mathbb{S}$ given that we believe that we are in mode $X$. We use a computable function $\phi_X:S_X \to \Delta_{ \mathcal{C} }$ to approximate our idealised function $\underline{\phi}:\mathbb{S}\to \Delta_{ \mathcal{C} }$. Fig.~\ref{part83} illustrates the actual computation ot this approximation to $\underline{\phi}:\mathbb{S}\to \Delta_{ \mathcal{C} }$.
\end{definition}

The function $\phi_X:S_X \to \Delta_{ \mathcal{C} }$ is used to see when we need to change mode from $X$ to another mode. If there is uncertainty about the current mode (as opposed to a normal mode transfer), then we could simultaneously run queries for other modes (e.g.\ $Y\subset  \mathcal{C} $ in Fig.~\ref{part83}). 

Decisions to prepare for a change of mode or to make a change of mode are based on positions in the simplicial complex. When in mode $X$ with state $s$ the interpretation of $X$'s monitoring data, represented by $\phi_X(s)\in\Delta_\mathcal{C}$, can trigger actions according to where $\phi_X(s)$ lies in appropriately defined subsets or zones. In particular, a change of mode is triggered by some exit zones. Recall the definition of thresholds in Section~\ref{basic_principles}.


\section{Three socially-centred examples}\label{social_examples}

So far we have mentioned physical control systems to illustrate the ideas surrounding modes. Here we give some social situations which can be modelled similarly. 
To relate them to the first part of the paper we need to consider the function $\underline\phi:\mathbb{S}\to \Delta_{\mathcal{C}}$ given after Definition \ref{realpu}.
We assess the idealised state space, the data which allows us to localise our position in that state space, and how $\phi$ depends on that data.


\subsection{An offender monitoring problem} \label{offender_monitoring_problem}

\noindent \textit{Scenario and its visualisation}. 
Consider an offender who has been sentenced for a crime committed under the influence of alcohol and who is
 being monitored for blood alcohol concentration and also tagged to limit how far they may go from home. For both alcohol and position we can assign a number, $x_{\mathrm{alc}}$ and $x_{\mathrm{tag}}$ respectively, in $[0,1]$ where $0$ is clearly OK and $1$ is clearly a problem. Given the context of this monitoring problem, the system state is an ordered pair $(x_{\mathrm{alc}},x_{\mathrm{tag}})\in [0,1]^2=\mathbb{S}$. This context deliberately omits much information, such as whether the offender is reading or sleeping, which if it was available is simply not recorded. The simplicity of assigning $\mathbb{S}=[0,1]^2$ is justified by the simplicity of the legal restrictions and the privacy of the offender. 
 
If at any time the following conditions are satisfied, interventions are triggered:

\smallskip

1) If the position is out of the allowed area, the offender is asked to see their probation officer.

2) If alcohol is higher than a trigger amount, the offender is asked to see their counsellor.

3) If both the above, the police are called as an incident may occur. 

\smallskip For both $x_{\mathrm{alc}}$ and $x_{\mathrm{tag}}$ we assign a region of values which are not problematic, and a region of values which are a problem, say $[0,\frac14]$ and $[\frac34,1]$ respectively. The fact that there is a gap between these regions is important: 

\smallskip 

a) It allows for error in a device or a difference in alcohol metabolism.

b) It allows for a warning to be sent to the offender, so that action can be taken to avoid an intervention (e.g.\ going home or stopping drinking). 

\medskip
\noindent\textit{Evidence and quantification}.
In Figure~\ref{wishlist3} we give
 a mode diagram for the actions, where we have used ovals to emphasise the points and lines where interventions occur. The filled in triangle (2-simplex) mode is responsible for monitoring the situation and issuing a warning to the offender if necessary. The point and line modes at the bottom are responsible for alerting the relevant services.

\begin{figure}[htbp]
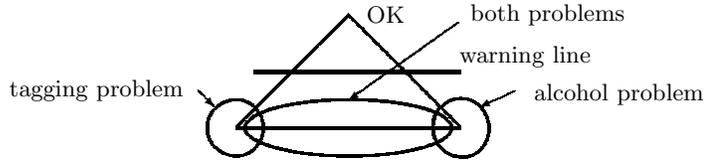

\begin{center}

\unitlength 0.5 mm


\caption{Diagram describing interventions in the alcohol-tagging problem}
\label{wishlist3}
\end{center}
\end{figure}

 We now consider a cover of the state space and its relation to the function $\underline\phi$ as discussed in Definition~\ref{realpu} and pictured in Figure~\ref{part4}.
We use the square state space $[0,1]^2$ and the cover given in 
 Figure~\ref{alctag3}. 
 There the first shaded region indicates an alcohol problem, the second a tagging problem, and the third the absence of a problem.
 The function $\phi$ is pictured in Figure~\ref{trisqu}.

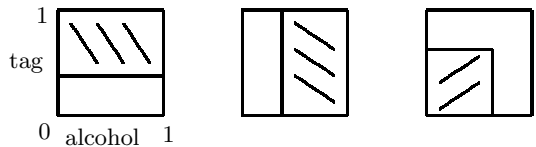
\begin{figure}[htbp]
\begin{center}

\unitlength 0.7 mm
\begin{picture}(120,32)(-10,23)
\linethickness{0.3mm}
\put(10,50){\line(1,0){20}}
\put(10,30){\line(0,1){20}}
\put(30,30){\line(0,1){20}}
\put(10,30){\line(1,0){20}}
\linethickness{0.3mm}
\put(45,50){\line(1,0){20}}
\put(45,30){\line(0,1){20}}
\put(65,30){\line(0,1){20}}
\put(45,30){\line(1,0){20}}
\linethickness{0.3mm}
\put(80,50){\line(1,0){20}}
\put(80,30){\line(0,1){20}}
\put(100,30){\line(0,1){20}}
\put(80,30){\line(1,0){20}}
\linethickness{0.3mm}
\put(10,37.5){\line(1,0){20}}
\linethickness{0.3mm}
\put(52.5,30){\line(0,1){20}}
\linethickness{0.3mm}
\multiput(12.5,47.5)(0.12,-0.18){42}{\line(0,-1){0.18}}
\linethickness{0.3mm}
\multiput(17.5,47.5)(0.12,-0.18){42}{\line(0,-1){0.18}}
\linethickness{0.3mm}
\multiput(22.5,47.5)(0.12,-0.18){42}{\line(0,-1){0.18}}
\linethickness{0.3mm}
\multiput(55,47.5)(0.18,-0.12){42}{\line(1,0){0.18}}
\linethickness{0.3mm}
\multiput(55,42.5)(0.18,-0.12){42}{\line(1,0){0.18}}
\linethickness{0.3mm}
\multiput(55,37.5)(0.18,-0.12){42}{\line(1,0){0.18}}
\linethickness{0.3mm}
\multiput(82.5,36.25)(0.18,0.12){42}{\line(1,0){0.18}}
\linethickness{0.3mm}
\multiput(82.5,31.25)(0.18,0.12){42}{\line(1,0){0.18}}

\put(18.5,26){\makebox(0,0)[cc]{alcohol}}
\put(4,40){\makebox(0,0)[cc]{tag}}
\put(7.5,27){\makebox(0,0)[cc]{0}}
\put(31,26.5){\makebox(0,0)[cc]{1}}
\put(7,49){\makebox(0,0)[cc]{1}}

\linethickness{0.3mm}
\put(92.5,30){\line(0,1){12.5}}
\linethickness{0.3mm}
\put(80,42.5){\line(1,0){12.5}}
\end{picture}

\caption{A cover of $[0,1]^2$ for the alcohol-tagging problem}
\label{alctag3}
\end{center}
\end{figure}

 \begin{figure}[htbp]
\begin{center}

 \unitlength 0.45 mm
\begin{picture}(187.5,50)(8,10)
\linethickness{0.8mm}
\multiput(113.12,32.5)(0.36,-0.12){167}{\line(1,0){0.36}}
\linethickness{0.3mm}
\multiput(153.12,62.5)(0.12,-0.3){167}{\line(0,-1){0.3}}
\linethickness{0.3mm}
\multiput(113.12,32.5)(0.16,0.12){250}{\line(1,0){0.16}}
\put(187.5,60){\makebox(0,0)[cc]{$({\mathrm{alcOK}},{\mathrm{tagOK}})$}}


\put(203,13.12){\makebox(0,0)[cc]{$({\mathrm{alcProb}},{\mathrm{tagOK}})$}}
\put(197,28){\makebox(0,0)[cc]{warning line}}
\put(96,25){\makebox(0,0)[cc]{$({\mathrm{alcOK}},{\mathrm{tagProb}})$}}

\linethickness{0.3mm}
\put(10,20){\line(0,1){40}}
\linethickness{0.3mm}
\put(10,20){\line(1,0){40}}
\linethickness{0.8mm}
\put(50,20){\line(0,1){40}}
\linethickness{0.8mm}
\put(10,60){\line(1,0){40}}
\linethickness{0.3mm}
\put(10,60){\line(1,0){10}}
\put(10,50){\line(0,1){10}}
\put(20,50){\line(0,1){10}}
\put(10,50){\line(1,0){10}}
\linethickness{0.3mm}
\put(10,30){\line(1,0){10}}
\put(10,20){\line(0,1){10}}
\put(20,20){\line(0,1){10}}
\put(10,20){\line(1,0){10}}
\linethickness{0.3mm}
\put(40,30){\line(1,0){10}}
\put(40,20){\line(0,1){10}}
\put(50,20){\line(0,1){10}}
\put(40,20){\line(1,0){10}}
\linethickness{0.1mm}
\multiput(20,30)(0.5,0.12){250}{\line(1,0){0.5}}
\put(145,60){\vector(4,1){0.12}}
\linethickness{0.1mm}
\multiput(20,50)(0.72,-0.12){125}{\line(1,0){0.72}}
\put(110,35){\vector(4,-1){0.12}}
\linethickness{0.1mm}
\multiput(50,20)(1.83,-0.12){63}{\line(1,0){1.83}}
\put(165,12.5){\vector(1,-0){0.12}}
\put(27.5,16){\makebox(0,0)[cc]{alcohol}}

\put(4,42.5){\makebox(0,0)[cc]{tag}}

\put(10,16){\makebox(0,0)[cc]{0}}

\put(50,16){\makebox(0,0)[cc]{1}}

\put(7,60){\makebox(0,0)[cc]{1}}

\linethickness{0.8mm}
\put(10,50){\line(1,0){30}}
\linethickness{0.8mm}
\put(40,20){\line(0,1){30}}
\linethickness{0.5mm}
\multiput(115,48)(0.36,-0.12){167}{\line(1,0){0.36}}
\end{picture}

\caption{The function $\underline{\phi}$ mapping $[0,1]^2$ to a 2-simplex for the alcohol-tagging problem}
\label{trisqu}
\end{center}
\end{figure}
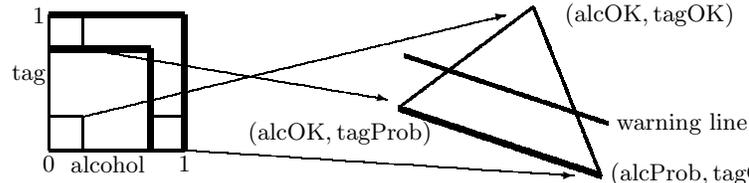

\begin{center}
  \begin{tabular}{@{} |c|c| @{}}
    \hline
    System &  Model \\ 
    \hline
    Algorithm & Automated system alerts professionals    \\  \hline
    Data & Subject's records, limited time memory \\ \hline
    Objectives & Prevent incident, promote treatment\\ \hline
    Oracles & Electronic tag, blood alcohol device \\ \hline
    Visualisation of  belief & Supporting decision in court\\ 
    \hline
  \end{tabular}
\end{center}

\subsection{Hospital admission}\label{admission}\label{man5}

\begin{figure}[htbp]
\begin{center}

\unitlength 0.7 mm
\begin{picture}(105,50)(0,25)
\linethickness{0.3mm}
\multiput(10,60)(0.36,-0.12){83}{\line(1,0){0.36}}
\linethickness{0.3mm}
\multiput(10,40)(0.36,0.12){83}{\line(1,0){0.36}}
\linethickness{0.3mm}
\multiput(40,50)(0.24,0.12){125}{\line(1,0){0.24}}
\linethickness{0.3mm}
\put(70,35){\line(0,1){30}}
\linethickness{0.3mm}
\multiput(40,50)(0.24,-0.12){125}{\line(1,0){0.24}}
\linethickness{0.3mm}
\put(70,65){\line(1,0){30}}
\linethickness{0.3mm}
\put(70,35){\line(1,0){30}}
\linethickness{0.3mm}
\multiput(70,65)(0.24,-0.12){125}{\line(1,0){0.24}}
\put(110,65){\makebox(0,0)[cc]{ward}}

\put(112,50){\makebox(0,0)[cc]{surgery}}

\put(118,35){\makebox(0,0)[cc]{leave hospital}}

\put(70,69){\makebox(0,0)[cc]{admit}}

\put(67,31){\makebox(0,0)[cc]{discharge}}

\put(86,43){\makebox(0,0)[cc]{treatment on}}
\put(86,38){\makebox(0,0)[cc]{discharge}}

\put(60,50){\makebox(0,0)[cc]{triage}}

\put(39,60){\makebox(0,0)[cc]{arrive in}}
\put(39,55.2){\makebox(0,0)[cc]{hospital}}

\put(10,36){\makebox(0,0)[cc]{take ambulance}}

\put(10,64){\makebox(0,0)[cc]{take other transport}}

\linethickness{0.1mm}
\multiput(50,50)(0.27,0.12){57}{\line(1,0){0.27}}
\linethickness{0.1mm}
\multiput(58.75,44.38)(0.33,0.12){21}{\line(1,0){0.33}}

\put(10,40){\makebox(0,0)[cc]{$\bullet$}}
\put(10,60){\makebox(0,0)[cc]{$\bullet$}}
\put(40,50){\makebox(0,0)[cc]{$\bullet$}}
\put(70,65){\makebox(0,0)[cc]{$\bullet$}}
\put(70,35){\makebox(0,0)[cc]{$\bullet$}}
\put(100,35){\makebox(0,0)[cc]{$\bullet$}}
\put(100,50){\makebox(0,0)[cc]{$\bullet$}}
\put(100,65){\makebox(0,0)[cc]{$\bullet$}}

\end{picture}

\setlength{\belowcaptionskip}{-15pt}
\caption{Visualising unplanned admissions to hospital}
\label{modpic3}
\end{center}
\end{figure}
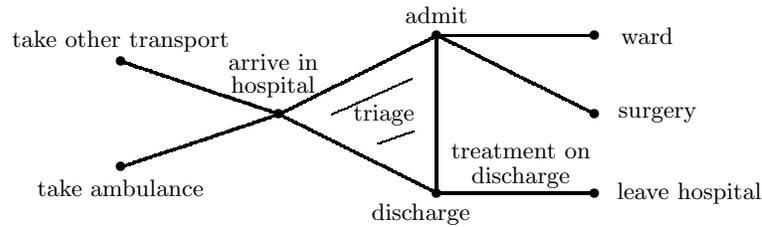

\noindent \textit{Scenario and its visualisation}.  In Figure~\ref{modpic3} we consider a person arriving at a hospital and entering an initial assessment process or triage.
 In the mode `in ambulance', the line between `take ambulance' and `arrive', the patient will be in the care of paramedics who may inform the hospital of an imminent arrival. 
 
 On arrival the patient enters the `triage' mode, the filled in triangle with vertices `arrive', `admit' and `discharge'. The purpose of this mode is to sort patients into those who have to stay in hospital and those who can go home. As evidence is collected the state of the patient will move along the triangle until it reaches one of the vertices `admit' or `discharge'. If the patient is known to be critical (e.g.\ by a report from the paramedics) then they may proceed along the line `emergency admission' connecting `arrive' and `admit'. If the hospital is closed to admissions then the `compulsory discharge' mode connecting `arrive' and `discharge' will be used.
 
  If the decision is made to discharge the patient may still receive treatment before leaving, indicated by the line leading to `leave'. Similarly being sent to surgery or a ward requires preparation. 

We will concentrate on the triage process itself, the filled in triangle in Figure~\ref{modpic3}. In contrast to the simple state space in the last example, we have $\mathbb{S}$ consisting of the possible states of health of a person, an enormously complicated space. Fortunately, we only have to make a decision to admit or discharge, and it is that path of likelihood of decisions as a function of time during the triage process that is plotted on the diagram.
  
  \medskip
\noindent\textit{Evidence and quantification}. 
To gather evidence we use a query tree. At each node in the tree a question is asked or an observation is made, and the next node is chosen depending on the result. Each note will have its own score for the three coordinates
\[
\phi(node)=
\big( \mathrm{begin\ triage}\ ,\ \mathrm{discharge}\ ,\ \mathrm{admit}\big)\ .
\]
This is a triple of positive numbers adding to one, and specifies a point in the triangle. 
For the later vertices, we might also get a list of possible conditions which will be added to the patients notes. The list of coordinates gives the function $\phi$, and this function is updated as the examination proceeds. Early in the process the `begin triage' value will be close to 1, and it will decrease as the examination continues.

\medskip
\noindent\textit{Thresholds and mode transitions}.  
The threshold values for decisions may depend on other factors. For example, if the hospital is not busy a patient might be admitted as soon as the `admit' coordinate reaches $0.8$, whereas on a very busy day it might have to reach $0.95$.

\begin{center}
  \begin{tabular}{@{} |c|c| @{}}
    \hline
    System &  Model \\ 
    \hline
    Algorithm & Formulae as helpers only, staff make decisions    \\  \hline
    Data & Medical records,\\ \hline
    Objectives & Triage to a set reliability in a set average time\\ \hline
    Oracles & Observations by medical staff, tests, patients narrative \\ \hline
    Visualisation of  belief & Expert system, medical staff have final say\\ 
    \hline
  \end{tabular}
\end{center}

 \begin{remark} \label{hosp}
 This example can be seen from several different points of view. The account above is from the point of view of the hospital. There are other agents at work, most obviously the machines; the ambulance and the medical equipment; and the people, the patient and the medical staff. 
 All of these have their own state spaces and their own reasons for acting and communicating as they do. (E.g.\ a blood pressure monitor may not have been properly maintained.)
\end{remark}


\subsection{A judicial process}

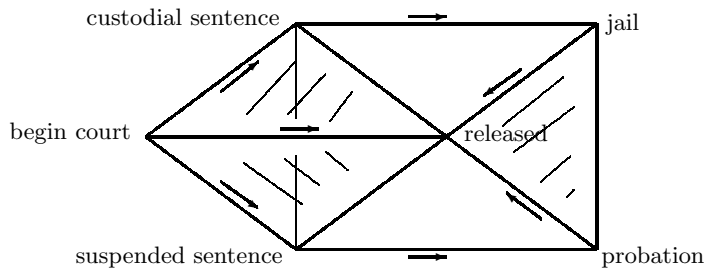
\begin{figure}[htbp]
\begin{center}

\unitlength 0.5 mm
\begin{picture}(125,80)(0,-1)
\linethickness{0.3mm}
\put(0,40){\line(1,0){80}}
\linethickness{0.3mm}
\multiput(40,70)(0.16,-0.12){250}{\line(1,0){0.16}}
\linethickness{0.3mm}
\multiput(0,40)(0.16,0.12){250}{\line(1,0){0.16}}
\linethickness{0.3mm}
\multiput(0,40)(0.16,-0.12){250}{\line(1,0){0.16}}
\linethickness{0.3mm}
\multiput(40,10)(0.16,0.12){250}{\line(1,0){0.16}}
\linethickness{0.3mm}
\multiput(80,40)(0.16,0.12){250}{\line(1,0){0.16}}
\linethickness{0.3mm}
\put(120,10){\line(0,1){60}}
\linethickness{0.3mm}
\multiput(80,40)(0.16,-0.12){250}{\line(1,0){0.16}}
\put(0,40){\makebox(0,0)[cc]{}}

\linethickness{0.3mm}
\put(40,70){\line(1,0){80}}
\linethickness{0.3mm}
\put(40,10){\line(1,0){80}}
\put(96,41){\makebox(0,0)[cc]{released}}

\put(127,70){\makebox(0,0)[cc]{jail}}

\put(10,72){\makebox(0,0)[cc]{custodial sentence}}

\put(9,8){\makebox(0,0)[cc]{suspended sentence}}

\put(-20,41){\makebox(0,0)[cc]{begin court}}

\put(135,8){\makebox(0,0)[cc]{probation}}

\linethickness{0.15mm}
\put(40,45){\line(0,1){25}}
\linethickness{0.15mm}
\put(40,10){\line(0,1){25}}
\linethickness{0.3mm}
\multiput(20,52)(0.15,0.12){67}{\line(1,0){0.15}}
\put(30,60){\vector(4,3){0.12}}
\linethickness{0.3mm}
\put(36,42){\line(1,0){10}}
\put(46,42){\vector(1,0){0.12}}
\linethickness{0.3mm}
\multiput(20,28)(0.17,-0.12){58}{\line(1,0){0.17}}
\put(30,21){\vector(3,-2){0.12}}
\linethickness{0.3mm}
\put(70,72){\line(1,0){10}}
\put(80,72){\vector(1,0){0.12}}
\linethickness{0.3mm}
\multiput(90,51)(0.17,0.12){58}{\line(1,0){0.17}}
\put(90,51){\vector(-3,-2){0.12}}
\linethickness{0.3mm}
\put(70,8){\line(1,0){10}}
\put(80,8){\vector(1,0){0.12}}
\linethickness{0.3mm}
\multiput(96,25)(0.16,-0.12){58}{\line(1,0){0.16}}
\put(96,25){\vector(-4,3){0.12}}
\linethickness{0.1mm}
\multiput(95,43)(0.15,0.12){108}{\line(1,0){0.15}}
\linethickness{0.1mm}
\multiput(98,36)(0.16,0.12){100}{\line(1,0){0.16}}
\linethickness{0.1mm}
\multiput(105,28)(0.14,0.12){58}{\line(1,0){0.14}}
\linethickness{0.1mm}
\multiput(112,24)(0.12,0.12){17}{\line(1,0){0.12}}
\linethickness{0.1mm}
\multiput(27,46)(0.12,0.13){108}{\line(0,1){0.13}}
\linethickness{0.1mm}
\multiput(38,46)(0.12,0.13){83}{\line(0,1){0.13}}
\linethickness{0.1mm}
\multiput(49,44)(0.12,0.16){50}{\line(0,1){0.16}}
\linethickness{0.1mm}
\multiput(26,33)(0.17,-0.12){92}{\line(1,0){0.17}}
\linethickness{0.1mm}
\multiput(37,34)(0.16,-0.12){58}{\line(1,0){0.16}}
\linethickness{0.1mm}
\multiput(48,36)(0.14,-0.12){42}{\line(1,0){0.14}}
\end{picture}

\setlength{\belowcaptionskip}{-15pt}
\caption{A simplified judicial process}
\label{judicial}
\end{center}
\end{figure}

\noindent \textit{Scenario and its visualisation}.  
Consider someone who is accused of a crime appearing in court. In Figure~\ref{judicial} they enter on the leftmost vertex.
In our simplified picture the court can take one of three options: innocent (released), guilty (custodial sentence-- go to jail) or guilty (suspended or deferred sentence -- allowed to leave under the supervision of the probation service). This can be represented by a 3-simplex (solid tetrahedron) with the weights of the vertices being the belief in the court in the three possible decisions and a weight indicating whether the court believes that it is time to reach a decision, i.e.\ whether the evidence has been considered. In terms of the previous subsets, the 3-simplex corresponds to the vertices
\[
\{\mathrm{begin\ court,custodial,release,suspended}\}\in\mathcal{C}
\]
 Initially we start at the begin court vertex, as no evidence has been presented, and (ideally) the court should not have a prior belief about the verdict. As time passes and the evidence is presented the beliefs change and we flow away from the court vertex, towards the 2-simplex giving the judgements (this is indicated by the arrows giving a control direction to the system). 

The system then enters the 2-simplex on the right. It would be possible to move 
from jail to probation (time served and good behaviour) or the other way for breaking conditions. The flow lines here indicate the progress towards the eventual end of the sentence (possibly reduced for good behaviour).

  \medskip
\noindent\textit{Evidence and quantification}. 
The court process (the solid tetrahedron in Fig.~\ref{judicial}) has some similarities to the medical triage we discussed earlier, in that it depends on expert opinions. Instead, we look at a more straightforward part of the proceedings.
We choose to look at the triangle in Figure~\ref{judicial} formed by the vertices released, jail and probation in  detail. 
The state space $\mathbb{S}$ will be the behaviour of the offender up to the current time $t$. (We measure time $t\in [0,1]$ as a fraction of the original sentence.) For simplicity we summarise the behaviour of the offender into a single good behaviour function $g(t)\in [0,1]$. We take $g(t)=0$ to be bad at time $t$, and $g(t)=1$ to be good. The function $g(t)$ could be given by a start value (say $g(0)=1$) at the beginning of the sentence. Subsequently every recorded incident of bad behaviour subtracts an amount from $g(t)$, but this deduction decays with time. 

 We now consider the 2-simplex in Figure~\ref{jpttrt} and the visualisation of the trajectory $\phi$ of the offender  in the system as a function of time. 
 At any time the offender is in one of the three states Released ($r$), Jail ($j$) or Probation ($p$). we will consider transitions out of these states, and fill in the $(r,j,p)$ values afterwards. We only specify these values on the boundary of the given domains, and fill in the other values by interpolation. (We could take a much more precise approach, but it would only obscure what is going on.)

\medskip
\noindent\textit{Thresholds and mode transitions}.  
As regards transferring between the three possibilities, we do not have a single threshold value. This might result in someone being moved between jail and probation every single day if they were near that value. Instead we assume that we have overlapping stable regions.
There are many ways in which we might choose to model this, but we shall choose catastrophe theory. This was invented to give a theory of continuous changes giving rise to discontinuous effects (see \cite{arnCat} for theory and \cite{ZeeSciAm} for applications).

 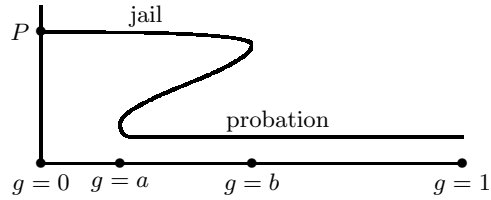
\begin{figure}[htbp]
\begin{center}
\unitlength 0.7 mm
\begin{picture}(90,42)(0,30)
\linethickness{0.3mm}
\qbezier(10,65)(30.88,65.01)(40.5,64.41)
\qbezier(40.5,64.41)(50.12,63.8)(50,62.5)
\qbezier(50,62.5)(50.04,61.21)(47.03,59.41)
\qbezier(47.03,59.41)(44.02,57.6)(37.5,55)
\qbezier(37.5,55)(30.98,52.4)(27.97,50.59)
\qbezier(27.97,50.59)(24.96,48.79)(25,47.5)
\qbezier(25,47.5)(24.99,46.2)(25.59,45.59)
\qbezier(25.59,45.59)(26.2,44.99)(27.5,45)
\qbezier(27.5,45)(28.71,45)(36.53,45)
\qbezier(36.53,45)(44.35,45)(60,45)
\qbezier(60,45)(75.66,45)(82.88,45)
\qbezier(82.88,45)(90.09,45)(90,45)
\linethickness{0.3mm}
\put(10,40){\line(0,1){30}}
\linethickness{0.3mm}
\put(10,40){\line(1,0){80}}
\put(55,48){\makebox(0,0)[cc]{probation}}

\put(30,68){\makebox(0,0)[cc]{jail}}

\put(90,40){\makebox(0,0)[cc]{$\bullet$}}

\put(10,40){\makebox(0,0)[cc]{$\bullet$}}
\put(10,65){\makebox(0,0)[cc]{$\bullet$}}

\put(50,40){\makebox(0,0)[cc]{$\bullet$}}

\put(25,40){\makebox(0,0)[cc]{$\bullet$}}

\put(10,36){\makebox(0,0)[cc]{$g=0$}}
\put(25,36){\makebox(0,0)[cc]{$g=a$}}
\put(50,36){\makebox(0,0)[cc]{$g=b$}}
\put(90,36){\makebox(0,0)[cc]{$g=1$}}

\put(6,65){\makebox(0,0)[cc]{$P$}}

\end{picture}
\setlength{\belowcaptionskip}{-15pt}
\caption{Moving from jail to and from probation}
\label{caty}
\end{center}
\end{figure}

In Fig.~\ref{caty} we model the transition from jail to probation, assuming that the transitions are time independent so the result only depends on good behaviour $g$. Beginning at point $P$, with $g=0$ in jail, a character decides to reform and increases their $g$ value. They move to the right along the curve until they reach $g=b$, at which point they `fall off' the jail surface onto the probation surface. If they behave more badly they will not be moved back to jail until they move left along the probation curve to the point $g=a$, at which point the possibility of remaining on probation disappears and they end back in jail. 
Seen from above, we would have a stable jail region from $g=0$ to $g=b$ and a stable jail region from $g=a$ to $g=1$. If a person is in one of these regions, they stay in it until they cross the boundary. 

Now we consider how to visualise this in a simplicial complex, in this simplified case an interval with end points `jail' and `probation'. 
In Fig.~\ref{caty} for $g<a$ there is no possibility other than jail, and likewise for $g>b$ we have probation. The interesting intermediate case is when $a\le g\le b$, when the person has a possibility of changing status, depending on how they behave. We give a suitable function from the state space (in this case just the $g\in [0,1]$ values) to the simplicial complex in Fig.~\ref{caty2}. 

 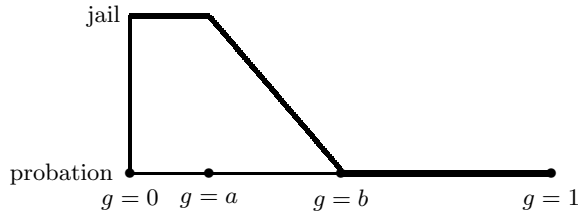
\begin{figure}[htbp]
\begin{center}
\unitlength 0.7 mm
\begin{picture}(90,42)(0,30)
\linethickness{0.3mm}
\put(10,40){\line(0,1){30}}
\linethickness{0.3mm}
\put(10,40){\line(1,0){80}}
\put(-3,40){\makebox(0,0)[cc]{probation}}

\put(5,70){\makebox(0,0)[cc]{jail}}

\put(90,40){\makebox(0,0)[cc]{$\bullet$}}

\put(10,40){\makebox(0,0)[cc]{$\bullet$}}

\put(50,40){\makebox(0,0)[cc]{$\bullet$}}

\put(25,40){\makebox(0,0)[cc]{$\bullet$}}

\put(10,35){\makebox(0,0)[cc]{$g=0$}}

\put(90,35){\makebox(0,0)[cc]{$g=1$}}

\linethickness{0.7mm}
\put(10,70){\line(1,0){15}}
\linethickness{0.7mm}
\multiput(25,70)(0.12,-0.14){208}{\line(0,-1){0.14}}
\linethickness{0.7mm}
\put(50,40){\line(1,0){40}}
\put(25,35){\makebox(0,0)[cc]{$g=a$}}

\put(50,35){\makebox(0,0)[cc]{$g=b$}}

\end{picture}
\setlength{\belowcaptionskip}{-15pt}
\caption{The map into the 1-simplex for jail and probation}
\label{caty2}
\end{center}
\end{figure}

Now we consider adding time $t\in [0,1]$ as a fraction of the original sentence to the state space, and being released to the simplicial complex. 

 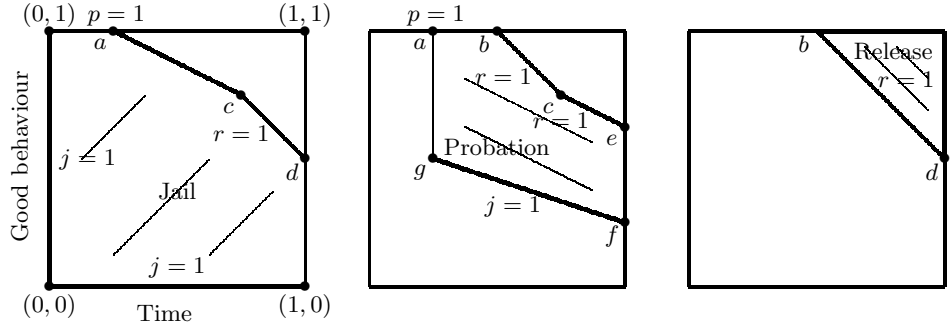
\begin{figure}[htbp]
\begin{center}
\unitlength 0.85 mm
\begin{picture}(150,50)(8,4)
\linethickness{0.5mm}
\put(10,10){\line(0,1){40}}
\linethickness{0.3mm}
\put(10,50){\line(1,0){40}}
\linethickness{0.3mm}
\put(50,10){\line(0,1){40}}
\linethickness{0.5mm}
\put(10,10){\line(1,0){40}}
\linethickness{0.3mm}
\put(60,10){\line(0,1){40}}
\linethickness{0.3mm}
\put(60,10){\line(1,0){40}}
\linethickness{0.3mm}
\put(100,10){\line(0,1){40}}
\linethickness{0.3mm}
\put(60,50){\line(1,0){40}}
\linethickness{0.3mm}
\put(110,50){\line(1,0){40}}
\linethickness{0.3mm}
\put(150,10){\line(0,1){40}}
\linethickness{0.3mm}
\put(110,10){\line(1,0){40}}
\linethickness{0.3mm}
\put(110,10){\line(0,1){40}}
\linethickness{0.5mm}
\multiput(70,30)(0.36,-0.12){83}{\line(1,0){0.36}}
\linethickness{0.1mm}
\put(70,30){\line(0,1){20}}
\linethickness{0.5mm}
\multiput(20,50)(0.24,-0.12){83}{\line(1,0){0.24}}
\linethickness{0.5mm}
\multiput(40,40)(0.12,-0.12){83}{\line(1,0){0.12}}
\linethickness{0.5mm}
\multiput(130,50)(0.12,-0.12){167}{\line(1,0){0.12}}
\linethickness{0.5mm}
\multiput(80,50)(0.12,-0.12){83}{\line(1,0){0.12}}
\linethickness{0.5mm}
\multiput(90,40)(0.24,-0.12){42}{\line(1,0){0.24}}
\put(70,50){\makebox(0,0)[cc]{$\bullet$}}

\linethickness{0.05mm}
\multiput(15,30)(0.12,0.12){83}{\line(1,0){0.12}}
\linethickness{0.05mm}
\multiput(20,15)(0.12,0.12){125}{\line(1,0){0.12}}
\linethickness{0.05mm}
\multiput(35,15)(0.12,0.12){83}{\line(1,0){0.12}}
\linethickness{0.05mm}
\multiput(75,42.5)(0.24,-0.12){83}{\line(1,0){0.24}}
\linethickness{0.1mm}
\multiput(75,35)(0.24,-0.12){83}{\line(1,0){0.24}}
\linethickness{0.05mm}
\multiput(137.5,47.5)(0.12,-0.12){83}{\line(1,0){0.12}}
\linethickness{0.05mm}
\multiput(142.5,47.5)(0.12,-0.12){42}{\line(1,0){0.12}}
\put(30,25){\makebox(0,0)[cc]{Jail}}

\put(80,32){\makebox(0,0)[cc]{Probation}}

\put(142,47){\makebox(0,0)[cc]{Release}}

\linethickness{0.5mm}
\put(130,50){\line(1,0){20}}
\linethickness{0.5mm}
\put(150,30){\line(0,1){20}}
\put(80,50){\makebox(0,0)[cc]{$\bullet$}}

\put(70,30){\makebox(0,0)[cc]{$\bullet$}}

\put(16,30){\makebox(0,0)[cc]{$j=1$}}

\put(30,10){\makebox(0,0)[cc]{}}

\put(30,13){\makebox(0,0)[cc]{$j=1$}}

\put(20.5,52.5){\makebox(0,0)[cc]{$p=1$}}
\put(70.5,52.5){\makebox(0,0)[cc]{$p=1$}}

\put(40,34){\makebox(0,0)[cc]{$r=1$}}

\put(82.5,22.5){\makebox(0,0)[cc]{$j=1$}}

\put(90,36){\makebox(0,0)[cc]{$r=1$}}
\put(81,43){\makebox(0,0)[cc]{$r=1$}}

\put(144,42.5){\makebox(0,0)[cc]{$r=1$}}

\put(20,50){\makebox(0,0)[cc]{$\bullet$}}

\put(18,48){\makebox(0,0)[cc]{$a$}}
\put(68,48){\makebox(0,0)[cc]{$a$}}
\put(78,48){\makebox(0,0)[cc]{$b$}}
\put(128,48){\makebox(0,0)[cc]{$b$}}
\put(38,38){\makebox(0,0)[cc]{$c$}}
\put(88,38){\makebox(0,0)[cc]{$c$}}
\put(90,40){\makebox(0,0)[cc]{$\bullet$}}
\put(40,40){\makebox(0,0)[cc]{$\bullet$}}
\put(48,28){\makebox(0,0)[cc]{$d$}}
\put(148,28){\makebox(0,0)[cc]{$d$}}
\put(50,30){\makebox(0,0)[cc]{$\bullet$}}
\put(150,30){\makebox(0,0)[cc]{$\bullet$}}

\put(100,35){\makebox(0,0)[cc]{$\bullet$}}
\put(98,33){\makebox(0,0)[cc]{$e$}}
\put(100,20){\makebox(0,0)[cc]{$\bullet$}}
\put(98,18){\makebox(0,0)[cc]{$f$}}
\put(68,28){\makebox(0,0)[cc]{$g$}}

\put(10,7){\makebox(0,0)[cc]{$(0,0)$}}
\put(50,7){\makebox(0,0)[cc]{$(1,0)$}}
\put(10,52.5){\makebox(0,0)[cc]{$(0,1)$}}
\put(50,52.5){\makebox(0,0)[cc]{$(1,1)$}}
\put(10,10){\makebox(0,0)[cc]{$\bullet$}}
\put(10,50){\makebox(0,0)[cc]{$\bullet$}}
\put(50,10){\makebox(0,0)[cc]{$\bullet$}}
\put(50,50){\makebox(0,0)[cc]{$\bullet$}}

\put(4,17){\rotatebox{90}{Good behaviour}}
\put(28,6){\makebox(0,0)[cc]{Time}}

\end{picture}

\setlength{\belowcaptionskip}{-15pt}
\caption{Domains for jail, probation and early release}
\label{jprt}
\end{center}
\end{figure}

In Figure~\ref{jprt} we the stable domains for jail, probation and early release and the corresponding maps into the triangle with vertices $j,p,r$ respectively. As previously mentioned, a convict will have a path through the square of good behaviour as a function of time (as a proportion of sentence). \newline
By \textit{stable domains} we mean that if a convict is in the shaded domain `Jail' in the left square of Figure~\ref{jprt}, then they will remain in jail unless their path leaves the shaded region. If their path leaves the Jail region on the interval $ac$ then they are transferred to probation. If it leaves on the interval $cd$ then they are given early release. \newline
Correspondingly, a convict in the domain `Probation' can exit to early release along the lines $bce$ and to jail along
$gf$. (They cannot exit by crossing $ag$ unless they have a time machine.)

 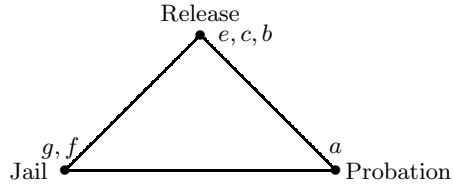
\begin{figure}[htbp]
\begin{center}
\unitlength 0.6 mm
\begin{picture}(85,42)(0,15)
\linethickness{0.3mm}
\multiput(20,20)(0.12,0.12){250}{\line(1,0){0.12}}
\linethickness{0.3mm}
\multiput(50,50)(0.12,-0.12){250}{\line(1,0){0.12}}
\linethickness{0.3mm}
\put(20,20){\line(1,0){60}}
\put(50,55){\makebox(0,0)[cc]{Release}}

\put(94,20){\makebox(0,0)[cc]{Probation}}

\put(12,20){\makebox(0,0)[cc]{Jail}}

\put(50,50){\makebox(0,0)[cc]{$\bullet$}}

\put(80,20){\makebox(0,0)[cc]{$\bullet$}}

\put(20,20){\makebox(0,0)[cc]{$\bullet$}}

\put(19,25){\makebox(0,0)[cc]{$g,f$}}

\put(60,50){\makebox(0,0)[cc]{$e,c,b$}}

\put(80,25){\makebox(0,0)[cc]{$a$}}

\end{picture}
\setlength{\belowcaptionskip}{-15pt}
\caption{Map to the 2-simplex for the Probation domain}
\label{jpttrt}
\end{center}
\end{figure}

To show the map to the simplex for Figure~\ref{jprt} we take the most complicated case of the probation domain shown in
Figure~\ref{jpttrt}. The map is interpolated between the values on the boundary to give values on the whole probation domain in a continuous manner.

\begin{center}
  \begin{tabular}{@{} |c|c| @{}}
    \hline
    System &  Model \\ 
    \hline
    Algorithm & Given by domains in Fig.~\ref{jprt}.  \\  \hline
    Data & Time $t$, good behaviour function $g(t)$. \\ \hline
    Objectives & Decide on release or parole.  \\ \hline
    Oracles & Reports on behaviour from warders, police, probation offlcers. \\ \hline
    Visualisation of  belief & Show how close the offender is to a transition. \\ 
    \hline
  \end{tabular}
\end{center}

\section{Mathematical Tools 2: Belief and visualisation} \label{belvis4}
%
%
%

In complex systems beliefs change. When a plane approaches within, say, 100 miles of its target airport the pilot will believe that it should prepare to land. If it is then found to be running out of fuel then the belief in landing will rise to absolute necessity, even overriding bad weather warnings. Thus, we emphasise that we are using beliefs as dynamic quantities, they are not merely assessing the likelihood of static conditions. 

We shall use the Dempster-Shafer theory of belief functions \cite{DemSchBel} to quantify belief on a scale of 0 (no evidence) to 1 (convincing evidence). There is a large literature on this subject, but it shares the idea of a measure for belief which can be updated on the presentation of additional evidence. Here we are not concerned with any particular mechanism of how this update is performed. 


\subsection {Dempster-Shafer belief} \label{bel99}

 It is important to distinguish between belief in something and a probability that that thing occurs.
 On being told by a completely reliable source that a person has a car which has a single colour, we could form two statements, 
\begin{center}
$B=$`the car is blue' and $NB=$`the car is not blue'.
\end{center}

The combination $B$ or $NB$ we would consider to have a belief of 1, but in the absence of any other information we would have no \textit{evidence base} to assign a non-zero belief value to the separate statement $B$ or to the statement $NB$. We could randomly assign belief values to each which added up to 1, but this would only undermine the idea that \textit{belief ought be assigned according to evidence}. Sensibly our belief $\mathrm{Bel}(\{B,NB\})$ in the statement ($B$ or $NB$) is strictly greater than the sum of $\mathrm{Bel}(\{B\})$ and $\mathrm{Bel}(\{NB\})$. Such a conclusion means that we are not dealing with a probability distribution.

 \begin{definition}\label{beldef}
  We consider beliefs in a finite set $X$ of statements.
A {\em generalised belief function} on $X$ is a function 
$$\mathrm{Bel}:P(X)\to [0,1],$$ 
where $P(X)$ is the set of subsets of $X$. Furthermore, for the empty set $\mathrm{Bel}(\emptyset)=0$ and
the sets satisfy the `super-additivity' property
\begin{eqnarray} \label{superadd}
\mathrm{Bel}(Y\cup Z)+\mathrm{Bel}(Y\cap Z)\ge \mathrm{Bel}(Y)+\mathrm{Bel}(Z)\quad\mathrm{for\ all}\ Y,Z\subset X.
\end{eqnarray}
 We denote by $\mathcal{B}(X)$ the set of belief functions on $X$. 
\end{definition}
 
We shall say that a belief function is {\em normalised} if $\mathrm{Bel}(X)=1$. It is important to note that we do \textit{not} impose this normalisation condition -- hence our word \textit{generalised} above. 
Thus, we reserve the right to believe that the set of presented alternatives $X$ may be incomplete.
The `super-additivity' property is simply justified in that, in addition to those people who believe in $Y$ and those who believe in $Z$, there may be people who just believe that at least one is true. 

Dual to the belief function is this:

\begin{definition}\label{plaus}
The {\em plausibility function}
$$\mathrm{Pla}:P(X)\to [0,1]$$
is defined by 
$$\mathrm{Pla}(Y)=\mathrm{Bel}(X)-\mathrm{Bel}(X\setminus Y),$$
where $X\setminus Y$  is the complement of $Y$ in $X$. 
\end{definition}

The plausibility of a subset of statements may be taken as the \textit{extent} to which people think that it might possibly be true. 

\begin{proposition}
$\mathrm{Bel}(Y)\le \mathrm{Pla}(Y)$ for all $Y\subset X.$
\end{proposition}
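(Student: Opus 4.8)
The plan is to derive the inequality directly from the super-additivity property (\ref{superadd}), using the single well-chosen substitution $Z = X \setminus Y$. The whole content of the statement is that plausibility dominates belief, and since $\mathrm{Pla}(Y)$ is \emph{defined} as $\mathrm{Bel}(X) - \mathrm{Bel}(X \setminus Y)$, the claim $\mathrm{Bel}(Y) \le \mathrm{Pla}(Y)$ is equivalent to the rearranged inequality
\[
\mathrm{Bel}(Y) + \mathrm{Bel}(X \setminus Y) \le \mathrm{Bel}(X).
\]
So the real target is this last inequality, which says that the belief mass carried separately by $Y$ and its complement cannot exceed the belief in the whole space.

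To reach it, I would instantiate the super-additivity hypothesis with the pair $Y$ and $Z = X \setminus Y$. Then the union is $Y \cup Z = X$ and the intersection is $Y \cap Z = \emptyset$, so (\ref{superadd}) reads
\[
\mathrm{Bel}(X) + \mathrm{Bel}(\emptyset) \ge \mathrm{Bel}(Y) + \mathrm{Bel}(X \setminus Y).
\]
Invoking the normalising convention $\mathrm{Bel}(\emptyset) = 0$ from Definition~\ref{beldef} removes the second term on the left, giving exactly $\mathrm{Bel}(X) \ge \mathrm{Bel}(Y) + \mathrm{Bel}(X \setminus Y)$. Substituting back the definition of $\mathrm{Pla}$ then yields $\mathrm{Bel}(Y) \le \mathrm{Bel}(X) - \mathrm{Bel}(X \setminus Y) = \mathrm{Pla}(Y)$, as required.

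There is no genuine obstacle here: the proof is a one-line application of super-additivity once the correct complementary pair is chosen, and it uses only the two defining axioms ($\mathrm{Bel}(\emptyset)=0$ and (\ref{superadd})), not normalisation $\mathrm{Bel}(X)=1$. The only point worth flagging is conceptual rather than technical — the argument does \emph{not} require $\mathrm{Bel}(X) = 1$, so the inequality holds for the generalised (non-normalised) belief functions the paper emphasises, and it is precisely the strictness of super-additivity (as illustrated by the blue-car example, where $\mathrm{Bel}(\{B,NB\})$ exceeds $\mathrm{Bel}(\{B\}) + \mathrm{Bel}(\{NB\})$) that makes the gap between belief and plausibility meaningful.
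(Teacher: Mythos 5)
Your proof is correct and is essentially identical to the paper's own argument: both apply super-additivity to the complementary pair $Y$ and $X\setminus Y$, use $\mathrm{Bel}(\emptyset)=0$, and rearrange via the definition of $\mathrm{Pla}$. Your observation that normalisation $\mathrm{Bel}(X)=1$ is never needed is also consistent with the paper's generalised setting.
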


\begin{proof}
Note that
$$\mathrm{Bel}(Y)\le \mathrm{Bel}(Y\cup (X\setminus Y))-\mathrm{Bel}(X\setminus Y)=\mathrm{Bel}(X)-\mathrm{Bel}(X\setminus Y) = \mathrm{Pla}(Y).$$ 
\qed
\end{proof}

Also, we can deduce that the plausibility function is sub-additive, i.e.,\
\[
\mathrm{Pla}(Y\cup Z)+\mathrm{Pla}(Y\cap Z)\le \mathrm{Pla}(Y)+\mathrm{Pla}(Z)\quad\mathrm{for\ all}\ Y,Z\subset X.\
\]


\subsection{Visualisation of a belief function} \label{visbelop}
In principle, belief functions give a large amount of data which is difficult to understand -- for a set $X$ of size $n$ we have $2^n$ values.\footnote{For a probability measure we would only need the $n$ point measures by additivity, but belief functions contain much more information.} 
 However, as long as we are clear about what information we actually want, it is possible to visualise the results in a much lower dimension. 
 
 So, in practice, what do we need to know about beliefs? 
 
 \smallskip
 \noindent (1) We need to know when to add to our list of things $X$ to believe -- when our existing
 tool collection is \textit{incomplete}, as described in subsection~\ref{actions}.
 
 \noindent (2) We need to subtract things from $X$ which we no longer think are necessary, in accordance with \textit{Occam's razor}, as described in subsection~\ref{actions}.
 
 \smallskip
 Now (1) has a simple answer, even though in practice that answer might lead to many more questions.

\begin{definition} \label{firrel}
Let $\mathcal{B}(X)$ be the set of belief functions on $X$.
The first piece of the visualisation of a belief function $\mathrm{Bel}\in \mathcal{B}(X)$ is the belief in the whole of $X$, that is $\mathrm{Bel}(X)\in [0,1]$. 
\end{definition}

For (2), how can we visualise a belief in subsets of $X$ so that it is obvious which objects in $X$ are no longer needed? Answer: we visualise the belief as a point  in the simplex $\Delta_X$ spanned by $X$, as given  in 
Proposition~\ref{simpX}.

\begin{definition}\label{secrel}
The second piece of the visualisation of a belief function $\mathrm{Bel}\in \mathcal{B}(X)$ is the point in the simplex 
\[
\frac{\sum_{x\in X} \mathrm{Pla}(\{x\})\,e_x}{\sum_{x\in X} \mathrm{Pla}(\{x\}))} \in  \Delta_X\ .
\]
\end{definition}

The first thing to note is that the behaviour of this fraction is, shall we say, erratic if the denominator is small. In our case, by  sub-additivity, the denominator is 
$$\sum_{x\in X} \mathrm{Pla}(\{x\})\ge \mathrm{Pla}(X)= \mathrm{Bel}(X).$$ 
Thus, if $\mathrm{Bel}(X)\in[0,1]$ is close to 1, it is reasonable to take the fraction. If it is close to zero we may have problems, but in that case our belief in the set $X$ is so small that all conclusions will be considered unreliable. 

The second thing to note is that this function in Definition~\ref{secrel} really does solve requirement (2). 
To make this precise we need the idea of a face of a simplex. 

\begin{definition}
For $Y\subset X$ the {\em face}
$\Delta_Y\subset\Delta_X$ of $\Delta_X$  is
\[
\Delta_Y =\Big\{ \sum_{y\in Y} \lambda_y\, e_y : \lambda_y\ge 0,\ \sum _{y\in Y} \lambda_y=1\Big\}   \subset \Delta_X\ .
\]
\end{definition}

\begin{proposition} \label{vischeck}
For a belief function on $X$ with $ \mathrm{Bel}(X)\neq 0$ and $Y\subset X$
\[
 \mathrm{Bel}(Y)= \mathrm{Bel}(X) \ \mathrm{if, and \ only,  if}\ 
\frac{\sum_{x\in X} \mathrm{Pla}(\{x\})\,e_x}{\sum_{x\in X} \mathrm{Pla}(\{x\}))} \in  \Delta_Y \ .
\]
\end{proposition}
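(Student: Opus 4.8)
The plan is to convert the geometric membership condition into a pointwise statement about single-element plausibilities, and then to characterise that statement using two order-theoretic properties of $\mathrm{Bel}$. The key preliminary observation is that the normalised barycentric point $\frac{\sum_{x\in X}\mathrm{Pla}(\{x\})\,e_x}{\sum_{x\in X}\mathrm{Pla}(\{x\})}$ lies in the face $\Delta_Y$ exactly when every coordinate indexed by $x\in X\setminus Y$ vanishes, i.e. when $\mathrm{Pla}(\{x\})=0$ for all $x\notin Y$ (the normalisation cannot change which coordinates are zero, since its denominator is positive). Because $\mathrm{Pla}(\{x\})=\mathrm{Bel}(X)-\mathrm{Bel}(X\setminus\{x\})$ by Definition~\ref{plaus}, the whole equivalence therefore reduces to showing: $\mathrm{Bel}(Y)=\mathrm{Bel}(X)$ if and only if $\mathrm{Bel}(X\setminus\{x\})=\mathrm{Bel}(X)$ for every $x\in X\setminus Y$.

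Before treating the two directions I would record two consequences of Definition~\ref{beldef}. First, $\mathrm{Bel}$ is monotone: applying super-additivity (\ref{superadd}) to $Y$ and $Z\setminus Y$ for $Y\subseteq Z$ gives $\mathrm{Bel}(Z)+\mathrm{Bel}(\emptyset)\ge\mathrm{Bel}(Y)+\mathrm{Bel}(Z\setminus Y)$, and since $\mathrm{Bel}(\emptyset)=0$ and $\mathrm{Bel}\ge0$ this yields $\mathrm{Bel}(Z)\ge\mathrm{Bel}(Y)$; in particular each $\mathrm{Pla}(\{x\})\ge0$. Second, the denominator is nonzero: by sub-additivity of $\mathrm{Pla}$ one has $\sum_{x}\mathrm{Pla}(\{x\})\ge\mathrm{Pla}(X)=\mathrm{Bel}(X)\neq0$, so the point is well defined. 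With monotonicity in hand the forward direction is immediate: assuming $\mathrm{Bel}(Y)=\mathrm{Bel}(X)$, for each $x\in X\setminus Y$ we have the inclusions $Y\subseteq X\setminus\{x\}\subseteq X$, so $\mathrm{Bel}(X\setminus\{x\})$ is squeezed between $\mathrm{Bel}(Y)$ and $\mathrm{Bel}(X)$, forcing $\mathrm{Bel}(X\setminus\{x\})=\mathrm{Bel}(X)$ and hence $\mathrm{Pla}(\{x\})=0$.

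The converse is where the real work lies, and I expect the inductive removal of elements to be the main obstacle. Writing $X\setminus Y=\{x_1,\dots,x_k\}$, I would show by induction on $j$ that $\mathrm{Bel}(X\setminus\{x_1,\dots,x_j\})=\mathrm{Bel}(X)$. For the inductive step set $A=X\setminus\{x_1,\dots,x_j\}$ and $B=X\setminus\{x_{j+1}\}$; since $x_{j+1}$ is distinct from each earlier $x_i$, one checks $A\cup B=X$ and $A\cap B=X\setminus\{x_1,\dots,x_{j+1}\}$. Super-additivity (\ref{superadd}) then gives
\[
\mathrm{Bel}(X)+\mathrm{Bel}(X\setminus\{x_1,\dots,x_{j+1}\})\ \ge\ \mathrm{Bel}(A)+\mathrm{Bel}(B)=2\,\mathrm{Bel}(X),
\]
so $\mathrm{Bel}(X\setminus\{x_1,\dots,x_{j+1}\})\ge\mathrm{Bel}(X)$, and monotonicity upgrades this to equality. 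After $k$ steps $X\setminus\{x_1,\dots,x_k\}=Y$, yielding $\mathrm{Bel}(Y)=\mathrm{Bel}(X)$ and closing the argument.

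The only subtleties to watch are the base case of the induction and the bookkeeping that $A\cup B=X$ persists at every stage, both of which rest on the removed elements being distinct and disjoint from $Y$; these are routine once the inductive mechanism above is set up. Thus the whole proposition follows from the single-element reduction in the first paragraph combined with the monotonicity and super-additivity of generalised belief functions.
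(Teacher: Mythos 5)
Your proof is correct, and its skeleton matches the paper's: both arguments reduce the geometric condition to the vanishing of the singleton plausibilities $\mathrm{Pla}(\{x\})$ for $x\in X\setminus Y$, and both prove the direction starting from $\mathrm{Bel}(Y)=\mathrm{Bel}(X)$ by exactly your monotonicity squeeze $\mathrm{Bel}(X)\ge \mathrm{Bel}(X\setminus\{x\})\ge \mathrm{Bel}(Y)$. The one place you genuinely diverge is the converse. The paper disposes of it in one step: from $\mathrm{Pla}(\{x\})=0$ for all $x\in X\setminus Y$ it invokes sub-additivity of the plausibility function to get $\mathrm{Pla}(X\setminus Y)=0$, and then unwinds Definition~\ref{plaus} to conclude $\mathrm{Bel}(Y)=\mathrm{Bel}(X)$. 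You instead run an explicit induction that strips the elements of $X\setminus Y$ one at a time, applying super-additivity of $\mathrm{Bel}$ to $A=X\setminus\{x_1,\dots,x_j\}$ and $B=X\setminus\{x_{j+1}\}$. These are equivalent in substance --- sub-additivity of $\mathrm{Pla}$ is the complement-dual of super-additivity of $\mathrm{Bel}$, and applying it to a union of singletons is the same finite induction --- but your route is more self-contained: the paper only asserts $\mathrm{Pla}$'s sub-additivity without proof, whereas you work directly from the axiom in Definition~\ref{beldef}, and you also make explicit the auxiliary facts (monotonicity of $\mathrm{Bel}$, nonnegativity of the coordinates, positivity of the normalising denominator) that the paper's proof uses tacitly when identifying membership of $\Delta_Y$ with vanishing coordinates.
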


\noindent\textbf{Proof:}\quad First, we assume the RHS. Then, for $x\in X\setminus Y$ we have
$\mathrm{Pla}(\{x\})=0$, and, by sub-additivity, $\mathrm{Pla}(X\setminus Y)=0$;  by definition of 
$\mathrm{Pla}$, we get $ \mathrm{Bel}(Y)= \mathrm{Bel}(X)$.

Second, we assume the LHS. Then for $x\in X\setminus Y$ we have $ \mathrm{Bel}(X)\ge  \mathrm{Bel}(X\setminus \{x\}) \ge  \mathrm{Bel}(Y)= \mathrm{Bel}(X) $ so 
$ \mathrm{Bel}(X\setminus \{x\})= \mathrm{Bel}(X)$. We deduce $\mathrm{Pla}(\{x\})=0$ and so
we get the RHS. \hfill $\square$

\medskip

\begin{definition} \label{visdef}
For a belief function on $X$ with $ \mathrm{Bel}(X)\neq 0$ we define the visualisation map 
$$\mathrm{Vis}:\mathcal{B}(X) \to [0,1] \times \Delta_X $$
 by
\[
\mathrm{Vis}(\mathrm{Bel})=\Big(     \mathrm{Bel}(X)  , \frac{\sum_{x\in X} \mathrm{Pla}(\{x\})\,e_x}{\sum_{x\in X} \mathrm{Pla}(\{x\}))}     \Big)   \ .
\]
\end{definition}


\subsection{The meanings and visualisation of belief functions} \label{visbelex}
We shall explain the use of the visualisation function in Definition~\ref{visdef} with the set $X=\{a,b,c\}$. 
The 2-simplex $\Delta_X$ is the filled in triangle in Figure~\ref{plasimp}, and the face 
$\Delta_Y$ for $Y=\{a,b\} \subset X$ is shown by the thickened line. 

The first component of the visualisation function can be interpreted as the reliability of the whole procedure. For example, before designers embark on a system, they may evaluate how much confidence they have in the methods and resources they are asked to use. This confidence level $\mathrm{Bel}(X)\in[0,1]$ affects the interpretation of everything which follows. It would be common to use a colour code where green would denote $\mathrm{Bel}(X)\cong 1$, orange an intermediate value and red for a low value of $\mathrm{Bel}(X)$. An alternative would be to attach a numerical value to the point in the simplex given by the second component of the visualisation.
Because of problems with colours and to give a more exact value we use both methods in Figure~\ref{plasimp}, where coloured dots give the positions in the simplex and in addition the value of 
$\mathrm{Bel}(X)$ is given as a percentage.

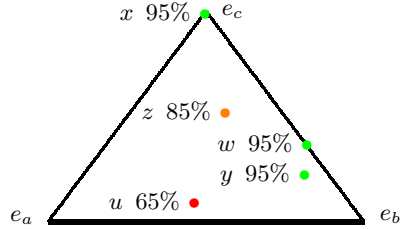
\begin{figure}[htbp]
\begin{center}

\unitlength 0.7 mm
\begin{picture}(85,45)(0,18)
\linethickness{0.3mm}
\multiput(10,20)(0.12,0.16){250}{\line(0,1){0.16}}
\linethickness{0.3mm}
\multiput(40,60)(0.12,-0.16){250}{\line(0,-1){0.16}}
\linethickness{0.6mm}
\put(10,20){\line(1,0){60}}
\linethickness{0.3mm}
\linethickness{0.3mm}
\linethickness{0.1mm}
\linethickness{0.3mm}
\linethickness{0.3mm}
\linethickness{0.3mm}

\put(5,20.8){\makebox(0,0)[cc]{$e_a$}}


\put(75,21){\makebox(0,0)[cc]{$e_b$}}


\put(45,60){\makebox(0,0)[cc]{$e_c$}}


\put(33,60){\makebox(0,0)[cc]{   $x$\, 95\%   {\color{green}$\bullet$      } }}
\put(52,35){\makebox(0,0)[cc]{   $w$\, 95\%   {\color{green}$\bullet$      } }}
\put(52,29){\makebox(0,0)[cc]{   $y$\, 95\%   {\color{green}$\bullet$      } }}
\put(37,41){\makebox(0,0)[cc]{   $z$\, 85\%   {\color{orange}$\bullet$      } }}
\put(31,24){\makebox(0,0)[cc]{   $u$\, 65\%   {\color{red}$\bullet$      } }}


\end{picture}

\setlength{\belowcaptionskip}{-15pt}
\medskip
\caption{Values of the visualisation map for the set $X=\{a,b,c\}$}
\label{plasimp}
\end{center}
\end{figure}

First, we consider the three green dots in Figure~\ref{plasimp}  with a belief value of $\mathrm{Bel}(X)=0\cdot 95$. We suppose that this value is sufficiently high that we have reasonable confidence in the rest of the values. The point $w$ is on the face $\Delta_{ \{b,c\}  }$ so, according to Proposition~\ref{vischeck}, in this case 
$$\mathrm{Bel}(\{b,c\} )=\mathrm{Bel}(X) \ \textrm{and} \ \mathrm{Bel}(\{a\} )=0.$$ 
The point $x$ is on the face $\Delta_{ \{c\}  }$ so, according to Proposition~\ref{vischeck}, in this case 
$$\mathrm{Bel}(\{c\} )=\mathrm{Bel}(X) \ \textrm{and} \ \mathrm{Bel}(\{a,b\} )=0.$$

The point $y$ is close to the face $\Delta_{ \{b,c\}  }$ so $\mathrm{Bel}(\{a\} )$ is small. In practice, we could decide that $a\in X$ was not needed if $\mathrm{Bel}(\{a\} )$ was sufficiently small, it would be almost impossible to obtain a value which was exactly zero. 

The orange point $z$ corresponds to a belief function for which $\mathrm{Bel}(X)=0\cdot 85$. At this level we suppose that there is significant concern that the model based on $X$ is not sufficient, and may not be performing optimally.
(Of course the threshold levels for concern would be set at different values for different models and problems.)  Still, according to the position being not close to the faces, we see that there is no evidence that we can ignore any elements of $X$. 

The red point $u$ corresponds to a belief function for which $\mathrm{Bel}(X)=0\cdot 65$. At this level we suppose that there is great concern that the model based on $X$ is not sufficient, and may be giving dubious values.



\section{Changing frames of belief and theories} \label{belief_theory_change}
So far, we have been considering systems in which we suppose the modes are chosen and fixed in the process of designing the system. The focus is on how we move between these modes according to prescribed conditions and algorithms that interpret monitoring data. Recalling our anthropomorphic guidelines in Section~\ref{explainability}, and the simple principles about the ideas of innovation, imagination and testing, we can ask: \textit{How might modes be modified or even created in response to unforeseen situations?} This is an open problem but one worthy of some light speculation, given its relevance for our approach. The light speculation soon encounters ancient and deep philosophical issues, however, especially in the epistemologies of the sciences, physical and social.


\subsection{Frames of belief as modes}\label{changing_theories}

\begin{quote}
In so far as a scientific statement speaks about reality, it must be falsifiable: and in so far as it is not falsifiable, it does not speak about reality.

\rightline{\textit{Karl R.\ Popper, The Logic of Scientific Discovery} \cite{PopperSciDis}}
\end{quote}   

Consider how the components of a mode in a system (see Definition~\ref{mode_working definition}) correspond with \textit{models of a real situation}, broadly conceived and interpreted, in the table:

\begin{center}
  \begin{tabular}{@{} |c|c| @{}}
    \hline
    System &  Model \\ 
    \hline
    Algorithm & Theories and formal descriptions to process \\ 
     & results and predict observations      \\  \hline
    Data & Data, determined by the theories, \\
    & from previous observations and computations \\ 
    & e.g.,\ constants of nature, parameters, ... \\ \hline
    Objectives & Predict observations, ... \\ \hline
    Oracles & Experimental apparatus, instruments, tools  \\
    & for making observations, and sampling... \\ \hline
    Visualisation of the & Means to evaluate how well the model predicts observations. \\ 
   belief function.\\ 
    \hline
  \end{tabular}
\end{center}

The visualisation of belief is one of the primary ideas of this paper (Section~\ref{visbelex}), and is focussed on understanding, accounting for and explaining a system; it is also valuable in thinking about design, e.g.,  could a simpler model work?  The question now is: 

\textit{On what foundations do the models and the beliefs depend?} 

We will use the terminology of a \textit{frame of belief} to denote whatever `knowledge'   the models and belief rest upon. Here is a working definition:

\begin{definition}\label{frame_of_belief}
Suppose that $\mathcal{M}$ is a list of theories, laws, principles, axioms, theorems, models, heuristics and guidelines, etc. that may be relevant to a situation; we call these {\em theoretical elements}. A subset $X\subset \mathcal{M}$ we will call a {\em frame of belief}, where to the description of each theoretical element we add methods of interaction between them.
 For modelling a particular problem we likely have to consider only a (small) subset $Y\subset X$ of the contents of $\mathcal{M}$. 
\end{definition}  

In modelling, and especially in modelling social contexts, there is a frame of belief upon which any particular model is built. In science and engineering, we expect the frames of belief to be clearly identifiable and indeed known, if not always forming a concensus.  

In social contexts, the frames of belief we expect to be harder to identify and in many cases contested.  In particular, in social affairs, commonly there is a cocktail of social theory, social and cultural orthodoxies, religious traditions, formal regulation, received opinion, political influence, prejudice, bias, etc.


\textit{What can be said about a frame of belief in general?} 

Here there is something of a cliff edge for authors of a humble technical paper on systems. The intellectual ocean that is the philosophy of knowledge is in sight with its schools of thought, philosophical theories, and profound debates, which cover so many physical, social and individual concerns. 

From our point view, we see and propose an abstraction:

\medskip
\begin{description}[noitemsep,topsep=0pt,align=left]
\item [Frame of Belief 1: Form.] A frame of belief can be considered as a mode
\item [Frame of Belief 2: Change.] A change of frame of belief can be considered as a change of mode.
\end{description}
\medskip

This means that the set of principles for modes we introduced earlier, in Section \ref{modes}, may be relevant to shaping and developing the notion, e.g., there are basic frames of belief, joint frames of belief, and frames of belief have hierarchical structure.

\subsection{Forms of frames of belief in science}\label{forms_of_frames}

However, in our context of thinking about human scenarios, there is something to gained initially for reflecting on changes in our scientific understanding of the physical world. In science and engineering it is easy to see ideas of innovation, imagination and testing as drivers of problem solving. So, for inspiration, let us turn to the history and philosophy of science and engineering, where we may find some suggestive bits and pieces to illustrate what `knowledge' underpins the making of a mode.

The situation in physical science is philosophically simpler than in human science: 

\begin{example} \label{sciDom}
\textit{Simplicial complex}.  Suppose that $\mathcal{M}$ is a list of scientific theories, old and new, e.g.,\ general relativity, quantum mechanics, atomic theory, Newtonian mechanics, thermodynamics, the luminiferous aether, the fluid theory of heat, the kinetic theory of heat, etc.  At some period in history, we can say that human knowledge has a frame of belief $X$  --  i.e., is in mode $X\subset \mathcal{M}$ --  if the science in $X$ is commonly applied to our understanding of (an aspect of) the natural world. We can take the abstract simplicial complex $\mathcal{C}$ to be the set of subsets $X$ which do not contain immediate contradictions (e.g.,\ Flat Earth and Round Earth theories do not both belong to an element of $\mathcal{C}$). \qed
\end{example}

A mode or model formed of several elements of $\mathcal{M}$ is not just the sum of the different components. 

\begin{example} \label{sciDom}
\textit{Joint modes}.  Consider combining Maxwell's equations of electromagnetism with Newtonian gravity. We need to decide how the theories affect each other. For example, we could decide that electromagnetic fields are not sources of gravity, and that they are in turn not affected by gravitational fields. Of course, this would run into problems because it could not predict gravitational lensing. 
\end{example}

\begin{example} \label{atom}
\textit{Hierarchy of modes}. We can illustrate frames of belief by opening up a mode `atomic theory' into various submodes, corresponding to different ideas of atomic structure \cite{atom1,Justi_Gilbert2000}. From ideas by the classical Greeks (including the name \textit{atom}), John Dalton proposed an idea of atomic theory, but without any internal structure of the atoms. In 1867, William Thomson proposed that atoms were knots in the aether. This was followed in 1878 by Alfred Mayer's proposition of atoms as magnets arranging themselves into molecules. After the discovery of the electron William Thomson and J.J. Thompson (the discoverer of the electron) proposed the `plum pudding' model for the atom. Rutherford proposed a nucleus surrounded by electrons. Niels Bohr introduced the quantisation of the orbits of the electrons, and Schrodinger gave the first modern treatment of atoms using wave functions, which has since been modified by the introduction of relativity and spin (the Klein-Gordon and Dirac equations). \qed
\end{example}

Scientific examples, old and new, are readily available and can be quite specific, with the theoretical elements being precise laws or equations. A major contemporary example would be the competing theories for explaining dark matter \cite{Fore 2020}.

However, for us it is important to note that Definition \ref{frame_of_belief} needs to be explored in terms of human-centred theoretical elements: sociological theories, social categories, economic models, cultural practices, regulations and laws, priorities and bias, social media tags, etc.  The epistemic space of social science is \textit{much} more complicated, of course, as many working notions are imprecise, debated and are culture and country specific. Yet, there is no shortage of government regulations, formal procedures, and notions of `best practice'  that require precision and agreement at some level. Indeed it is these that invite automation.

For our general speculative discussion here, the social examples would contain too many distractions.

However, the attempt to define frames of belief for human and social contexts is directly relevant to our research agenda here, e.g., consider frames of belief as envelopes for the examples in Section \ref{social_examples}.


\subsection{Changing frames of belief in science}\label{changing_theories}

Now we need to look at how frames of belief change. Interpreted as modes, change is facilitated by transition functions, and there is an interesting link between transition functions and information hiding. Two types of change have been highlighted in Section \ref{actions}.

We consider moving from a mode $X$ to a subset mode $Y\subset X$ when we no longer believe that its complement $X\setminus Y$ in $X$ is relevant and may be hidden or discarded. In agreement with Ockham's Razor 2 in Section \ref{actions}, we established a principle to move to a simpler (subset) mode where possible. 

\begin{example} \label{Phlogiston}
Phlogiston was a substance whose release was supposed to cause combustion \cite{PhlogistonWiki}. Over time the existence of Phlogiston became simply not necessary to explain phenomena. In the notation above from mode $X$ we have $X\setminus Y=\{\mathrm{Phlogiston}\}$. \qed
\end{example}

Alternately, consider moving from mode $X$ to a superset mode 
 $X\subset Z$. This is more difficult as we are moving into an unknown country where there are factors in mode $Z$ that mode $X$ was never designed to understand. 
Firstly, we would not want to move from mode $X$ if we believe that it is doing well, there has to be a motivation to move, such as problems. In science, change has come about from `a state of growing crisis': 

\begin{quote}
Thermodynamics was born from the collision of two existing nineteenth-century physical theories, and quantum mechanics from a variety of difficulties surrounding black-body radiation, specific heats, and the photoelectric effect. Furthermore, in all these cases except that of Newton the awareness of anomaly had lasted so long and penetrated so deep that one can appropriately describe the fields affected by it as in a state of growing crisis.  

\rightline{\textit{Thomas S.\ Kuhn, The Structure of Scientific Revolutions \cite{StrSciRev}}}
\end{quote}

However, not so obviously, there is a problem when this crisis is too large.  It is not that we cannot change mode in a time of great crisis (indeed there may be little alternative), it is rather that we would be so far into an unknown country that the result of doing so would not be predictable and beset with risk. 


Finally, recall Example \ref{sciDom} with visualisation in mind:

\begin{example}\label{queen}
For a philosophical example, we take $\mathcal{M}$ to be the theories of the universe. A set of theories $X\subset \mathcal{M}$ will be a simplex if those theories could possibly co-exist simultaneously, i.e.,\ are not contradictory. Introducing the geometry of simplicies, this means there is no edge between the vertices `flat Earth' and `round Earth', but there is an edge between `special relativity' and `thermodynamics'.  Further, being on that edge would allow more theory and predictions of observations than either theory could in isolation. 
Evidence from observations and experiments would be used to place our actual universe  (from a hypothetical `state space' of universes) at some point in the simplicial complex, or to show that our theories are faulty by awkwardly pointing at a state not compatible with our construction. \qed
\end{example}

\begin{example}\label{xcxcxc}
In particle physics we have modes consisting of field theories, so Quantum Electrodynamics would be a mode and the Standard Model would be another mode. We begin in such a mode (currently the Standard Model), and experiments are proposed and modelled (typically by Feynman diagrams). Then data from actual experiments is compared to the calculated values. If the experimental results differ from the calculated values by $3\sigma$ ($\sigma$ denotes one standard deviation in terms of experimental and statistical error) then we are concerned that the model may not be correct.  If the experimental results differ from the calculated values by $5\sigma$ then we believe that there is a real problem and seek a solution by modifying the model. This would typically be by proposing a new particle or field to add to the model. 
\end{example}

%


\section{Concluding remarks}


\subsection{Reflections}\label{reflections}

Our paper is about automation in the context of human-centred systems. With automation in mind, we have introduced 

\medskip
1. a new general system model based on a notion of modes and mode transitions;

2. formalised the model mathematically both algebraically and geometrically;

3.  advocated an anthropomorphic approach that focuses on the explication of systems, based on intuitions about conceptual frameworks and frames of belief; 

4. proposed human-centred case studies to illustrate/test the model; and

5. connected our concept of `belief about data' with the established quantitative theory of belief pioneered by Dempster and Shafer.

\medskip
With purely technical ideas in mind, the contributions are (1), (2) and (5), introducing concepts and data structures for modelling modes in which the position of the system in the geometric visualisation of the modes represents quantified judgements or beliefs about the state of the system. Thus, our system model focuses on the dynamics of changing interpretations of monitoring data and offers some insight into reasons for decisions taken to change the behaviour of the system.

With human-centred applications in mind, the contributions are (3) and (4) in which we align the system model with simple working assumptions and principles of human reasoning and scenarios for automation.   To guide the design of changes in the system we considered, for example, Incompleteness and Occam's Razor (as described in Section~\ref{explainability}), which motivate a move to a simpler system when possible, and to a more complicated system only when forced to do so. We noted how changes in the `frames of belief' that underly the design of modes are also a factor, and we speculated that they can have the same structure as modes. 

\subsection{Questions and answers}\label{Q&A}
Here are some questions that come to mind that may help to clarify a few points:
 
\smallskip
\noindent\textbf{Q:} \textit{Does the modes model apply only to the highest level of the organisation of the system?} 

\noindent\textbf{A:} No, we can have a hierarchy where each of the modes in the top level becomes, on closer examination, a simplicial complex of modes (called sub-modes, and having the same organising principles). From the system design point of view, this is simply a form of information hiding (e.g., a single mode labelled `court' for a police officer opens into a multitude of submodes for those organising court proceedings). From a mathematical point of view, the relevant idea to preserve the structure is a simplicial map between simplicial complexes. 

\smallskip
\noindent\textbf{Q:} \textit{How do modes relate to the environment?} 

\noindent\textbf{A:} 
For dealing mathematically with the real world we have in mind the formalism of \textit{physical oracles} \cite{Axiomatising}  that both gather information and affect the real world, and allows for errors and delays. Each mode has a set of such oracles with which it can communicate. 

 \smallskip
\noindent\textbf{Q:} \textit{How does the geometry specify the algorithms?} 

\noindent\textbf{A:} There is a huge amount of information hiding in which the geometry of the modes specify which modes neighbour the current mode, and the algorithm only knows about those neighbouring modes.

 \smallskip
\noindent\textbf{Q:} \textit{Can a system of modes be retro-fitted to explain some system?} 

\noindent\textbf{A:} Perhaps: because modes combine objectives, specifications and implementations, this combination can identify the scope and limits of components and refactor a complex system.

\smallskip

Modes have a number of connections with general techniques and themes in computing:

 \smallskip
\noindent\textbf{Q:} \textit{How do modes relate to learning algorithms such as neural nets?} 

\noindent\textbf{A:} 
The oracle formalism can also be used to import output from a potentially non-explainable algorithm (such as a neural net), where the possible error can quantified and considered along with other information. Actually, the output of a classification procedure, such as a neural net, can be taken as an element of a simplex spanned by the elements of the set of possible classifications, and thus can fit into a simplicial approach.

\smallskip
\noindent\textbf{Q:} \textit{How do modes relate to general themes such as security, safety and privacy?} 

\noindent\textbf{A:} 
Modes can be classified by allocated a label for security clearance, potential harm or  sensitivity. This can be done via a simplicial map to a simplicial complex of appropriate classifications and this can determine what information (via the oracles) each mode has access to. Only certain modes would have access to oracles allowing some possibly problematic behaviour -- other modes simply cannot access these oracles.  For instance, simplicial complexes stand comparison with lattices \cite{dennInfo} for a formal study of security. Oracles cover all external computers and databases and sensors and actuators . Judgements can be made as to how reliable these oracles are.


\subsection{On systems in AI and current controversies}\label{current_controversy}

Considering the list of five contributions of this paper above, numbers (1), (2) and (5) are about general system theories, which are relevant to the design of many kinds of computing system. The problem of controlling autonomous systems in unpredictable environments is central to the remit of AI, which has benefitted from new system models with different motivations, features and mathematical theories. Since the 1960s many new system  models have been developed such as deterministic and nondeterministic networks of computing units and concurrent systems shaped by theories of automata, probability and physical systems as in control systems \cite{Kalmanetal1969} and hybrid systems \cite{MoBiAI}. Our `first principles approach' to the technical material aligns with more fundamental conceptual approaches to systems, such as Carl Hewitt's thinking about concurrency and his actor model, which finds a home in AI through the modern and conceptually strong theory of agents \cite{WooldridgeJennings1995,Wooldridge2009,LeonardosPiliouras2022}. The  usefulness of our technical work to aspects of multiagent modelling is something worth investgating. Perhaps agents can be factorised into modes or simply be modes.

Contributions numbered (3) and (4) are relevant to human-centred AI applications, which are the cause of current AI controversies. Issues such as transparency, explainability, etc. are more prominent, significant and difficult to tackle for human-centred AI applications.  In this paper, we consider automation of human-centred tasks. We also try to consider human reasoning and innovation, and their links with modes through examples. 

One underlying theme is an idea of belief, and we use this and simplicial geometry both to make decisions and to visualise beliefs about the state of the system, both these being important topics in AI. The simplicial model is a higher dimensional data structure whose compactess and natural heirachical structure improves on planar graphs.  Graphs of various kinds can be found in most system theories, and play a role in explainability, e.g., in machine learning \cite{TiddiSchlobach}.

The frameworks of belief underlying human-centred systems are hard to analyse and codify even for our simplified examples and seem destined to become a topic in their own right. Our refections are shaped by the role of models and theories in the history of science, which are convenient, general and better understood than examples of frameworks of belief that are highly contextual, made from from social theories, regulations, cultural norms, or best practice. 

Thus, we see our paper as offering some conceptual and mathematical tools for thinking about AI systems to better understand and make them more more easily accountable, which are essential as AI systems deal with people.

Any discussion of methods for understanding automation may be relevant to currently growing AI controversies. For example, among a number of headline grabbing actions an open letter ``Pause Giant AI Experiments" has been signed by many leading industrialists and academics.\footnote{See: https://futureoflife.org/open-letter/pause-giant-ai-experiments.}  In the space of a few months, this was followed by further warnings, international meetings about regulation, widespread debates on AI tools in many sectors. Our connection to these `growing crises' is through our interest in explainable automations, most urgently in decision making in human affairs.

\medskip
\noindent \textit{Legacy liabilities}. We are proposing an explanatory framework for logical AI based on human reasoning, which would be used to contain and limit machine learning (e.g.,\ by imposing Asimov's laws and other social norms). Sufficiently extensive interconnected logical AIs, developed over decades with many technologies, are bound to pose problems in the years to come. If legacy AIs
were also granted permission to innovate and mutate, the problem could be intractable -- given our problems with current legacy software.  It is rather essential that any means by which we can understand, impose limits on, reason with and keep track of, etc, an automation is explored. And it helps to know what we might be looking for as we research the future.

\medskip
\noindent \textit{Decision making}.  At the centre of our thinking about modes is decision making. Decision making permeates computing applications. Monitoring and managing systems involves decisions to change modes of operation based on judgements about available monitoring data. In human-centred systems, classifying people's data involves choices about the characteristics that define typologies, from recommender systems in shopping to identifying and tagging objects and faces. In the professional and social realm, decisions based on the information available about people have automatically filtered and processed applications and requests, from creditworthy loans to job applications. Commonly, decisions in applications involve weighing up data, arguments, evidence, etc. and making a judgement. Probability theories have helped analyse decision making since the 18th Century, in the birth of the mathematics of insurance. More recently, belief theories \cite{Shaf,DemSchBel}, many valued logics \cite{MVLog},  and sundry machine learning techniques \cite{Manglaetal2023}, have provided mathematical theories with which to explore inexact reasoning and making decisions. Underwritten by the guiding formalisation of an abstract simplicial complex, the scope of the modes could become much wider.


\bibliographystyle{compj}

\end{document}